\documentclass[11pt,reqno]{amsart}

\usepackage[table]{xcolor} 
\usepackage[utf8]{inputenc}
\usepackage[T1]{fontenc}
\usepackage{amsmath,amsfonts}
\usepackage{algorithmic}
\usepackage{algorithm}
\usepackage{array}
\usepackage{textcomp}
\usepackage{stfloats}
\usepackage{url}
\usepackage{stmaryrd}
\usepackage{graphicx}
\usepackage{cite}
\usepackage{mathrsfs}
\usepackage{amsthm}
\usepackage{amssymb}
\usepackage{microtype}
\usepackage{subcaption}  
\usepackage{textcomp}
\usepackage{hyperref}
\hypersetup{urlcolor=blue, citecolor=red}
\usepackage[rightcaption]{sidecap}
\usepackage{tikz}
\usetikzlibrary{arrows.meta,positioning,fit,calc}

\usepackage{fullpage}
\usepackage{pgfplots}
\usepackage[siunitx]{circuitikz}
\pgfplotsset{compat=1.15}
\usepackage{mathrsfs}

\DisableLigatures{encoding = *, family = * }

\usepackage{cleveref}
\usepackage{mathtools}
\usepackage{amsopn}

\begingroup 
\theoremstyle{plain}
\newtheorem{theorem}{Theorem}[section]
\newtheorem{proposition}[theorem]{Proposition}
\newtheorem{lemma}[theorem]{Lemma}
\theoremstyle{definition}
\newtheorem{definition}[theorem]{Definition}
\newtheorem{remark}[theorem]{Remark}
\endgroup

\numberwithin{table}{section}
\numberwithin{equation}{section}
\numberwithin{figure}{section}

\def \*#1{\boldsymbol{#1}}
\newcommand{\NN}{\mathbb{N}}

\newcommand{\RR}{\mathbb{R}}

\newcommand{\MC}[1]{\mathcal{#1}}
\newcommand{\inter}[1]{\llbracket #1 \rrbracket}

\definecolor{capri}{rgb}{0.0, 0.75, 1.0}
\definecolor{carnationpink}{rgb}{1.0, 0.65, 0.79}
\definecolor{brightgreen}{rgb}{0.4, 1.0, 0.0}
\definecolor{softgray}{RGB}{245,246,248}

\title{Interior interpretability with attention rollout: contraction and propagation profiles in Transformers}
\subjclass[2020]{68T07,68T01,62R07}
\keywords{Interior interpretability, Attention rollout, Transformer architectures, Propagation analysis}

\author{Umberto Biccari\textsuperscript{\,$\dagger$}}
\address{\textsuperscript{$\dagger$}\, Chair of Computational Mathematics, DeustoTech, University of Deusto, Avenida de las Universidades 24, 48007 Bilbao, Basque Country, Spain.} 
\email{umberto.biccari@deusto.es}

\thanks{This project has received funding from the European Research Council (ERC) under the European Union's Horizon Europe research and innovation programme (grant agreement No. 101096251, CoDeFeL). This material is based upon work supported by the Air Force Office of Scientific Research under award number FA8655-22-1-7012. EZ was partially supported by the Alexander von Humboldt Professorship program; the European Union's Horizon Europe MSCA project ModConFlex (HORIZON-MSCA-2021-DN-01, project 101073558); the Transregio 154 Project Mathematical Modelling, Simulation and Optimization Using the Example of Gas Networks of the DFG; and SURE-AI: The Norwegian Centre for Sustainable, Risk-Averse, and Ethical AI, grant 357482, Research Council of Norway. UB and EZ were partially supported by the Grant PID2023-146872OB-I00-DyCMaMod of MICIU (Spain) and by the COST Actions CA24122-Multiscale Stochastics, Patterns, and Analysis of Combinatorial Environments and CA24136-Interactions between Control Theory and Machine Learning. QH was supported by the RYC2024-048848-I grant funded by MICIU/AEI/10.13039/501100011033 and the FSE+.}

\author{Qian Huang\textsuperscript{\,$\ast$ \, $\ddagger$}} 
\address{\textsuperscript{$\ast$}\, Universidad Carlos III de Madrid, Gregorio Mill\'an Institute for Fluid Dynamics, Nanoscience and Industrial Mathematics.
\newline \indent \textsuperscript{$\ddagger$} Universidad Carlos III de Madrid, and Universidad Carlos III de Madrid, Department of Mathematics, Legan\'es, 28911, Spain.} 
\email{qhuang@inst.uc3m.es}

\author{Enrique Zuazua\textsuperscript{\,$\dagger$\,$\P$\,$\S$}}  
\address{\textsuperscript{$\dagger$}\, Chair of Computational Mathematics, DeustoTech, University of Deusto, Avenida de las Universidades 24, 48007 Bilbao, Basque Country, Spain.} 
\address{\textsuperscript{\,$\P$} Chair for Dynamics, Control, Machine Learning, and Numerics (Alexander von Humboldt-Professorship), Department of Mathematics, Friedrich-Alexander-Universit\"at Erlangen-N\"urnberg, 91058 Erlangen, Germany}
\address{\textsuperscript{\,$\S$} Universidad Aut\'onoma de Madrid, Departamento de Matem\'aticas, Ciudad Universitaria de Cantoblanco, 28049 Madrid, Spain.}
\email{enrique.zuazua@deusto.es, enrique.zuazua@fau.de, enrique.zuazua@uam.es}

\begin{document}

\begin{abstract}
Feature-attribution methods assign scores relating input variables to a model's output, but do not by themselves characterize how explicitly defined interaction operators compose across its intermediate layers. We introduce \emph{interior interpretability}, a propagation-based perspective on internal model organization, and instantiate it for tabular Transformers using attention rollout. We interpret rollout as a row-stochastic operator encoding attention-mediated propagation between feature tokens. By applying classical Doeblin--Dobrushin contraction theory, we show that a rollout operator with a small Dobrushin coefficient is quantitatively close to a rank-one stochastic matrix whose common row is determined by its normalized column sums. This result gives a structural interpretation to the corresponding rollout propagation profile. In Transformers trained for metabolomic age prediction, the measured rollout contraction strengthens with depth. Trained and randomly initialized models also exhibit different propagation profiles, although the present experiments do not establish the predictive relevance of individual rollout-ranked variables. Exploratory comparisons with PCA and GradientExplainer approximations to SHAP reveal localized agreement among highly ranked variables but weak agreement across complete rankings. Attention rollout is therefore used here as a diagnostic of attention-mediated propagation, not as a causal explanation or faithful attribution of the complete Transformer.
\end{abstract}

\maketitle

\section{Introduction and motivation}

Methods for feature attribution address a central interpretability question: how do the input variables of a learned predictor contribute to its output? For a model $f:\RR^d\to\RR$, methods such as SHAP, Integrated Gradients, and saliency maps analyze the map $x\mapsto f(x)$ and assign a score to each coordinate of $x$ \cite{arunika2024survey,lundberg2017unified,molnar2020interpretable,mosca2022shap,simonyan2013deep,sundararajan2017axiomatic,zhang2021survey}. These methods provide output-oriented explanations, but they do not by themselves describe how explicitly identifiable interaction operators accumulate across the intermediate layers of a deep architecture.

This paper studies that complementary question. We use the term \emph{interior interpretability} for the analysis of propagation operators associated with intermediate model interactions. The objective is deliberately narrower than reconstructing the complete hidden-state dynamics: a propagation operator captures only the part of the computation represented by its construction. In the Transformer setting considered here, the relevant operators are obtained from self-attention and describe attention-mediated exchange between feature tokens. They do not include value and output projections, normalization, MLP sublayers, or all nonlinear transformations of the complete model.

Attention rollout \cite{abnar2020quantifying} provides a natural object for this analysis. It composes the attention operators of successive layers, including the standard residual averaging, into a row-stochastic matrix. For tabular Transformers, where individual variables are represented by dedicated tokens, the resulting matrix describes how attention-mediated propagation between source and destination features accumulates with depth. This stochastic-operator viewpoint leads to two distinct questions:
\begin{enumerate}
    \item What structural form does the rollout operator acquire as depth increases?
    \item Which properties of its propagation profile follow from the rollout construction, and how do the observed profiles differ between trained and randomly initialized models?
\end{enumerate}

The first question is mathematical. Products of stochastic matrices are governed by classical contraction theory, suggesting that the rows of a sufficiently contractive rollout operator should approach a common profile. The rollout product supplies a layerwise contraction bound, but this bound does not by itself guarantee decay with depth. The second question is empirical. The common profile need not be uniform and depends on the realized attention matrices. Separating the mathematical constraints of rollout composition from profile differences observed after optimization is therefore essential: rank-one organization alone is not evidence of learning or task relevance.

Figure \ref{fig:interpretability_scheme} places this propagation viewpoint alongside two reference analyses used in the empirical study. PCA describes variance structure in the input data, GradientExplainer provides an expected-gradients approximation to prediction-attribution scores, and rollout describes the propagation geometry encoded by the attention operators. These are not three interchangeable notions of explanation; they answer different questions and are compared to determine where their rankings agree or diverge.

\begin{figure*}
    \centering
    \begin{tikzpicture}[
        font=\rmfamily,
        node distance=1.2cm and 1.3cm,
        main/.style={draw, rounded corners, thick, align=center, minimum height=1.25cm, text width=3.2cm},
        lens/.style={draw, rounded corners, align=center, minimum height=0.9cm, text width=3.5cm, fill=softgray},
        arrow/.style={-{Latex[length=2.5mm]}, thick},
        link/.style={-{Latex[length=2mm]}, dashed, thick}
    ]
    \node[main, draw=capri, fill=capri!8] (data) {Input data\\$x=(x_1,\ldots,x_d)$};
    \node[main, draw=carnationpink, fill=carnationpink!8, right=of data] (hidden) {Internal representations\\$Z^{(0)}\to\cdots\to Z^{(L)}$};
    \node[main, draw=brightgreen, fill=brightgreen!8, right=of hidden] (output) {Prediction $f(x)$};
    
    \shade[left color=capri,right color=carnationpink]
        ($(data.east)+(0,-0.55pt)$)
        rectangle
        ($(hidden.west)+(-0.14cm,0.55pt)$);
    \draw[-{Latex[length=2.5mm]}, thick, carnationpink]
        ($(hidden.west)+(-0.18cm,0)$) -- (hidden.west);
    \shade[left color=carnationpink,right color=brightgreen]
        ($(hidden.east)+(0,-0.55pt)$)
        rectangle
        ($(output.west)+(-0.14cm,0.55pt)$);
    \draw[-{Latex[length=2.5mm]}, thick, brightgreen]
        ($(output.west)+(-0.18cm,0)$) -- (output.west);
    
    \node[lens, below=1.0cm of data] (pca) {\textbf{PCA}\\What structures variance in the data?};
    \node[lens, below=1.0cm of hidden] (rollout) {\textbf{Attention rollout}\\How does attention-mediated propagation accumulate?};
    \node[lens, below=1.0cm of output] (shap) {\textbf{GradientExplainer}\\Which variables receive prediction-attribution scores?};
    
    \draw[link, capri] (pca) -- (data);
    \draw[link, carnationpink] (rollout) -- (hidden);
    \draw[link, brightgreen] (shap) -- (output);
    \end{tikzpicture}
    \caption{Three complementary analytical viewpoints used in this work. PCA describes variance structure in the input data, attention rollout describes attention-mediated propagation across intermediate layers, and GradientExplainer provides an expected-gradients approximation to prediction-attribution scores. These quantities are not interchangeable and need not induce the same feature rankings.}
    \label{fig:interpretability_scheme}
\end{figure*}

The paper makes four contributions:
\begin{enumerate}
    \item We formulate interior interpretability as the analysis of propagation operators associated with intermediate model interactions and instantiate this perspective using attention rollout in tabular Transformers.
    \item Using a direct finite-dimensional consequence of classical Doeblin--Dobrushin theory \cite{dobrushin1956central}, we characterize the rank-one regime of rollout operators and identify their normalized column sums with the corresponding mean row profile.
    \item We place the layerwise contraction bound for rollout composition alongside empirical differences between trained and randomly initialized propagation profiles.
    \item We compare rollout propagation rankings with PCA and GradientExplainer approximations to SHAP in a metabolomic age-prediction case study. These comparisons are descriptive and exploratory: rollout is treated as a diagnostic of attention-mediated propagation rather than as a causal or faithful explanation of the complete predictor.
\end{enumerate}

The empirical setting is metabolomic age prediction, an application of current interest in precision medicine \cite{hannum2013genome,horvath2013dna,ibanez2025metabolomic}. The real-world study is complemented by a synthetic example used only as a second qualitative illustration. Across the trained architectures, rollout contraction strengthens with depth. The trained--random-initialization comparisons address a different question: the two groups exhibit different propagation profiles and rankings in the reported runs. Agreement with GradientExplainer scores is localized among some highly ranked variables, while complete rankings remain weakly correlated.

The present paper is organized as follows. Section \ref{sec:attention} introduces the analytical framework underlying the proposed interpretability approach. In particular, we present the attention rollout operator and its structural characterization. In Section \ref{sec:methods}, we describe the dataset and models used in our empirical study, whose results are presented in Section \ref{sec:experiments}. We conclude with a discussion of the implications of our findings and potential directions for future research in Section \ref{sec:conclusions}. Appendix \ref{app:proof} contains the proof of the structural result and additional mathematical details supporting our analysis. Finally, Appendix \ref{app:bio} places selected highly ranked variables in biomedical context.

\subsection{Positioning with respect to the existing literature}\label{subsec:literature}

The perspective developed here belongs to the broader effort to understand internal model mechanisms beyond input--output behavior. Related work includes mechanistic interpretability and neural-circuit analysis \cite{elhage2022toy,nanda2023progress,olah2020zoom,rauker2023toward}, as well as methods that probe learned representations \cite{alain2016understanding,hewitt2019designing}. These approaches study different objects and levels of explanation. Our narrower focus is the structure of an explicitly defined propagation operator derived from attention matrices.

Within this broader perspective, attention rollout is interpreted as a global stochastic operator describing how attention-mediated interactions compose across successive Transformer layers. This operator does not represent the complete evolution of the hidden states.

Our work is also related to the extensive literature discussing whether attention mechanisms can be interpreted as explanations. Previous studies \cite{bibal2022attention,grimsley2020attention,jain2019attention,serrano2019attention} have shown that attention distributions may exhibit weak correlation with attribution measures and that substantially different attention patterns can lead to similar predictions. More generally, several attention-based interpretability methods have been proposed for Transformers, including relevance-propagation approaches that combine attention with gradient information \cite{chefer2021transformer}. Consequently, attention weights should not generally be regarded as faithful feature attributions or causal explanations.

The viewpoint adopted here is fully consistent with these observations. We do not interpret attention as explaining model predictions. Instead, rollout is viewed as an attention-derived propagation surrogate describing how attention mass is redistributed across tokens when layerwise attention matrices are composed. From this perspective, attribution-based methods and propagation-based analyses address complementary aspects of interpretability. 

\section{Attention-based interpretability}\label{sec:attention}

Attention rollout defines feature-to-feature propagation operators summarizing attention-mediated exchange across Transformer layers. Understanding the structural properties of these operators allows us to describe the organization encoded by rollout and motivates the rollout propagation scores introduced later in the paper.

\subsection{Transformers and the self-attention mechanism}\label{subsec:Transformer}

In this work, we consider a single-output Transformer-based regression model of the form
\begin{align*}
    x \in \RR^d \mapsto \xi(x) \in \RR^p \mapsto f(x)=w^\top \xi(x)+b \in\RR,
\end{align*}
where $p$ indicates the dimension of the latent representation space. At layer $\ell$, the hidden state is $Z^{(\ell)} \in \RR^{d\times p}$, whose $i$-th row $Z_i^{(\ell)}\in\RR^p$ corresponds to the latent representation of feature $i$. The latent representation $\xi(x)\in\RR^p$ is obtained by mean pooling over the feature representations of the last Transformer layer.

The central mechanism governing feature interactions is \emph{multi-head self-attention} \cite{vaswani2017attention}. For each layer $\ell\in\inter{L}$ and attention head $h\in\inter{H}$, the attention matrix is
\begin{align*}
	A^{(\ell,h)} = \text{softmax}\left(\frac{Q^{(\ell,h)} (K^{(\ell,h)})^\top}{\sqrt{p_h}}\right)\in\RR^{d\times d},
\end{align*}
where $p_h$ denotes the query/key dimension of head $h$. The matrices are constructed from
\begin{align*}
    \text{queries: } Q^{(\ell,h)} = Z^{(\ell-1)} W_Q^{(\ell,h)}, \quad \text{keys: } K^{(\ell,h)} = Z^{(\ell-1)} W_K^{(\ell,h)}.     
\end{align*}
Here, the notation
\begin{align*}
	\inter{q}\coloneqq\{1,\ldots,q\} \quad\text{ for all } q\in\NN^\ast,
\end{align*}
indicates the set containing the first $q$ strictly positive natural numbers. 

Notice that $A^{(\ell,h)}$, as all other attention-based quantities considered in this work, is actually dependent on the input datum $x\in\RR^d$, that is 
\begin{align*}
    A^{(\ell,h)}=A^{(\ell,h)}(x).    
\end{align*}

However, for simplicity of notation, we will avoid writing explicitly this dependence unless strictly necessary. Notice also that $A^{(\ell,h)}$ is row-stochastic as
\begin{align*}
	A^{(\ell,h)}_{ij}\geq 0 \;\;\text{ for all } i,j\in\inter{d} \quad\text{ and }\quad \displaystyle \sum_{j=1}^d A^{(\ell,h)}_{ij}=1 \;\;\text{ for all } i\in\inter{d}.    
\end{align*}

Moreover, to describe the average propagation geometry induced by multi-head attention, we introduce the head-averaged attention matrix
\begin{align*}
    A^{(\ell)} = \frac 1H\sum_{h=1}^H A^{(\ell,h)},
\end{align*}
that by construction remains row-stochastic. 

Finally, motivated by the presence of residual connections, we shall view the hidden-state evolution schematically through the effective propagation operators $I+A^{(\ell)}$, so that successive Transformer layers induce an iterative process of the form
\begin{align*}
    Z^{(L)} \rightsquigarrow \left(I+A^{(L)}\right) \cdots \left(I+A^{(1)}\right)Z^{(0)}.
\end{align*}

This effective description provides a convenient representation of how local feature interactions accumulate through depth and organize the global propagation dynamics inside the network.

\begin{remark}\label{rem:transformer}
While self-attention provides the principal mechanism through which information is exchanged across features, the standard Transformer block contains more components: the attention layer is followed by residual connections, normalization layers, and multilayer perceptrons (MLPs), all of which introduce additional nonlinear transformations of the hidden representations. The normalization layers rescale and stabilize the representations, whereas the MLP sublayers enrich the feature representations through nonlinear mixing in the latent space. Consequently, the exact evolution of the hidden states cannot be described solely in terms of attention matrices. Nevertheless, because self-attention is the component that explicitly governs feature-to-feature interactions, it provides a natural and tractable framework for interior interpretability. 
\end{remark}

\subsection{Attention rollout}\label{subsec:rollout} 

Attention rollout \cite{abnar2020quantifying} provides a global description of how local attention interactions accumulate across the sequence of attention layers.

Given an input datum $x\in\RR^d$, we define the \emph{local rollout operator} as the ordered composition
\begin{align}\label{eq:rollout}
	R(x) = \*A^{(L)}(x)\*A^{(L-1)}(x)\,\cdots\, \*A^{(1)}(x)\in\RR^{d\times d},    
\end{align}
with 
\begin{align*}
    \*A^{(\ell)}(x) = \frac 12\Big(I+A^{(\ell)}(x)\Big).  
\end{align*}

Since each $\*A^{(\ell)}$ is row-stochastic, the rollout matrix $R(x)$ is also row-stochastic for all $x\in\RR^d$. Its coefficient $R_{ij}(x)$ quantifies the cumulative attention-mediated propagation from source feature $j$ to destination feature $i$ under the rollout approximation.

To obtain a cumulative description of these feature interactions at the level of a dataset $\{x_n\}_{n=1}^N\subset\RR^d$, we also define the corresponding \emph{global rollout operator}
\begin{align}\label{eq:rollout_global}
	\*R = \frac 1N\sum_{n=1}^N R(x_n)\in\RR^{d\times d},    
\end{align}
which provides a coarse-grained description of the dataset-averaged geometry encoded by rollout. Notice that also $\*R$ is row-stochastic by construction.

Throughout this paper, attention rollout is interpreted as an attention-mediated propagation operator. Accordingly, expressions such as propagation, information flow, and influence always refer to the propagation encoded by the rollout operator, rather than to the exact evolution of the Transformer hidden states. As discussed in Remark \ref{rem:transformer}, the latter also depends on value projections, output projections, normalization layers, and MLP transformations, which are not represented by the rollout operator.

Beyond providing a compact description of attention-mediated propagation, rollout raises a natural mathematical question. For each input $x$, the local rollout $R(x)$ is an ordered product of row-stochastic propagation matrices, whereas the global rollout $\*R$ is the dataset average of these local products. The structural result below applies to either operator because both are row-stochastic. The depth-dependent product estimate developed in Appendix \ref{app:proof} is first established for the local rollout and then transferred to the global rollout by convexity. To state the structural result, we recall the Dobrushin contraction coefficient (see \cite{dobrushin1956central} or \cite[Chapter 4, Definition 4.6]{seneta2006non}), which quantifies the contraction of row-wise oscillations induced by a row-stochastic matrix.

\begin{definition}\label{def:dobrushin}
Given a row-stochastic matrix $P\in\RR^{d\times d}$, the Dobrushin contraction coefficient is defined by
\begin{align*}
    \kappa(P) \coloneqq 1-\min_{i,k\in\inter{d}} \sum_{j=1}^d \min\{P_{ij},P_{kj}\}.
\end{align*}
Equivalently,
\begin{align}\label{eq:dobrushin_def_eq}
    \kappa(P) = \frac12 \max_{i,k\in\inter{d}} \|p_i-p_k\|_1,    
\end{align}
where $p_i$ denotes the $i$-th row of $P$. 
\end{definition}

\begin{theorem}\label{thm:rollout}
Let $\MC R=(\MC R_{ij})_{i,j=1}^d\in\RR^{d\times d}$ be a row-stochastic rollout operator, and let $\kappa(\MC R)$ denote its Dobrushin coefficient. Define $v\coloneqq d^{-1}\MC R^\top\*1$, where $\*1$ denotes the all-ones vector in $\RR^d$. Then $v$ is a probability vector, the matrix $\Pi_{\MC R}\coloneqq\*1v^\top$ is row-stochastic and has rank one and, if we define
\begin{align*}
    \|\MC R-\Pi_{\MC R}\|_{\infty,1} \coloneqq \max_{i\in\inter{d}}\sum_{j=1}^d|(\MC R-\Pi_{\MC R})_{ij}|, 
\end{align*}
we have
\begin{align*}
    \kappa(\MC R)\leq \|\MC R-\Pi_{\MC R}\|_{\infty,1} \leq 2\left(\frac{d-1}{d}\right)\kappa(\MC R).    
\end{align*}
\end{theorem}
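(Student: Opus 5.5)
The plan is to handle the preliminary claims first, then prove the two inequalities, writing $r_i$ for the $i$-th row of $\MC R$ and noting that $v=\frac1d\sum_{k=1}^d r_k$ is the average of the rows.

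\emph{Preliminary claims.} Since every $r_k$ is a probability vector, $v$ is a convex combination of probability vectors. Hence $v\geq 0$ and $\*1^\top v = d^{-1}\*1^\top\MC R^\top\*1 = d^{-1}\sum_i\sum_j \MC R_{ij}=1$, so $v$ is a probability vector. The matrix $\Pi_{\MC R}=\*1 v^\top$ has every row equal to $v$. It is therefore row-stochastic, and it has rank one because $v\neq 0$. Finally, the $i$-th row of $\MC R-\Pi_{\MC R}$ is $r_i-v$, so
\begin{align*}
\|\MC R-\Pi_{\MC R}\|_{\infty,1}=\max_{i\in\inter{d}}\|r_i-v\|_1 .
\end{align*}

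\emph{Upper bound.} Fix $i$ and write
\begin{align*}
r_i - v=\frac1d\sum_{k=1}^d (r_i-r_k).
\end{align*}
The term with $k=i$ vanishes, so the triangle inequality leaves $d-1$ terms. Each satisfies $\|r_i-r_k\|_1\leq 2\kappa(\MC R)$ by \eqref{eq:dobrushin_def_eq}. This gives $\|r_i-v\|_1\leq \frac{d-1}{d}\,2\kappa(\MC R)$ for every $i$, which is the upper bound after taking the maximum over $i$.

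\emph{Lower bound.} Choose indices $i,k$ attaining the maximum in \eqref{eq:dobrushin_def_eq}. Then
\begin{align*}
2\kappa(\MC R)=\|r_i-r_k\|_1\leq \|r_i-v\|_1+\|v-r_k\|_1\leq 2\max_{m\in\inter{d}}\|r_m-v\|_1 ,
\end{align*}
and dividing by $2$ gives $\kappa(\MC R)\leq\|\MC R-\Pi_{\MC R}\|_{\infty,1}$.

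\emph{Main obstacle.} The only delicate point is keeping the factor $\frac{d-1}{d}$ in the upper bound. It comes from discarding the vanishing diagonal term $r_i-r_i$ before applying the triangle inequality; bounding all $d$ terms would only give $2\kappa(\MC R)$.

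The argument uses only that $\MC R$ is row-stochastic, so it applies equally to the local rollout $R(x)$ and the global rollout $\*R$. The equivalence of the two formulas for $\kappa$ in Definition~\ref{def:dobrushin} is taken as given; if needed, it follows from $\min\{a,b\}=\frac{a+b-|a-b|}{2}$ summed over $j$, using that each row sums to one.
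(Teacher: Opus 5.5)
Your proof is correct and follows essentially the same route as the paper: the rows are averaged as $r_i-v=\frac1d\sum_{k}(r_i-r_k)$, dropping the vanishing $k=i$ term gives the factor $\frac{d-1}{d}$, and the lower bound comes from the triangle inequality through $v$. Your short derivation of the equivalence of the two formulas for $\kappa$ via $\min\{a,b\}=\frac{a+b-|a-b|}{2}$ is a small addition; the paper simply takes this equivalence as given in Definition~\ref{def:dobrushin}.
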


Notice that Theorem \ref{thm:rollout} is stated for a generic rollout operator $\MC R$, allowing it to be applied both to local rollout matrices $R(x)$ and to the global rollout $\*R$ defined in \eqref{eq:rollout_global}. 

Theorem \ref{thm:rollout} shows that, when the Dobrushin coefficient is small, the rollout operator $\MC R$ is well approximated by $\Pi_{\MC R} = \*1 v^\top$, with $v= d^{-1}\MC R^\top\*1$. Consequently, every row of $\MC R$ is close to the same probability vector $v$. More precisely, for every destination index $i\in\inter{d}$,
\begin{align*}
    \sum_{j=1}^d \left|\MC R_{ij}-v_j\right|
    \leq 2\left(\frac{d-1}{d}\right)\kappa(\MC R).
\end{align*}

Thus, when $\kappa(\MC R)$ is small, the distribution of source-feature contributions received by the different destination features depends only weakly on the destination index. Since
\begin{align*}
    \sum_{i=1}^d \mathcal R_{ij}=dv_j,  
\end{align*}
the coefficient $v_j$ is exactly the normalized column mass of source feature $j$. Thus the common row profile $v$ identifies how propagation is distributed among the source features, while the dependence on the destination feature is approximately lost. 

The contraction estimate used in Theorem \ref{thm:rollout} is a direct finite-dimensional consequence of classical results on stochastic matrices and ergodicity coefficients developed in the theory of nonhomogeneous Markov chains; see \cite{dobrushin1956central} and \cite[Section 4.3]{seneta2006non}. Related contraction phenomena have also appeared in mathematical analyses of deep attention mechanisms \cite{alcalde2025clustering,alvarez2026perceptrons,dong2021attention,geshkovski2023emergence,geshkovski2025mathematical}. Its role here is to provide an analytical lens for attention rollout: it identifies the normalized column sums as the mean row profile and quantifies when the destination-dependent variation around that profile is small.

\begin{remark}
No novelty is claimed for the Doeblin--Dobrushin contraction estimate itself. The contribution made here is its application and interpretation in the attention-rollout setting, including the distinction between contraction of the rows and heterogeneity of the resulting propagation profile. This distinction motivates the propagation scores used in the empirical analysis.
\end{remark}

\subsection{Rollout propagation scores}\label{subsec:rollout_importance}

Theorem \ref{thm:rollout} identifies the normalized column-sum vector $v$ as the common row profile appearing in the rank-one approximation of the rollout operator. This gives the column sums a precise structural interpretation: when the Dobrushin coefficient is small, they encode the remaining dependence of the common propagation profile on the features after the row-wise variability has contracted. Motivated by this observation, we use the column sums as propagation scores.

Given the global rollout matrix $\*R\in\RR^{d\times d}$ introduced in \eqref{eq:rollout_global}, we therefore define the rollout column-score
\begin{align}\label{eq:feature_score}
	s_j = \sum_{i=1}^d \*R_{ij} \quad\text{ for all }j\in\inter{d},
\end{align}
which measures the total propagated mass associated with source feature $j$, aggregated over all destination features. Features with larger $s_j$ therefore contribute more strongly to the global propagation profile encoded by the rollout operator. Consequently, sorting the features according to $s_j$ yields a propagation-based ranking describing their relative contribution to that profile. Notice that, since $s_j=dv_j$, where $v$ is the probability profile introduced in Theorem \ref{thm:rollout}, the scores $s_j$ differ from $v_j$ only by the constant normalization factor $d$ and therefore induce exactly the same ranking. Theorem \ref{thm:rollout} provides the structural motivation for these scores. Because rollout omits value and output projections, normalization, and MLP transformations, the scores should not be interpreted as causal effects or as faithful attributions of the complete predictor.

\section{Experimental setup and interpretability pipeline}\label{sec:methods}

In this section, we describe the experimental framework used to examine the proposed propagation analysis, including the datasets, the models, and the analysis pipeline.

\subsection{Dataset}\label{subsec:data}

For our experiments, we considered one biomedical dataset and one language-model-generated tabular dataset for age prediction. Each sample is described by a set of metabolic, biochemical, clinical, and possibly environmental variables. Their main characteristics are summarized in Table \ref{tab:datasets}.

\texttt{Dataset 1} consists of anonymized metabolomic measurements obtained from an existing biomedical cohort under a data-sharing agreement with the data owners. Each participant contributed a single observation. The data are not publicly available because of ethical and legal restrictions. The original cohort design, recruitment procedures, laboratory protocols, and ethical approvals are described in \cite{ibanez2025metabolomic}, which should be consulted for clinical details beyond the scope of the present methodological study.

\texttt{Dataset 2}, instead, consists of synthetic tabular data generated using ChatGPT. The generation prompt requested plausible ranges and qualitative relationships among common metabolic, biochemical, and clinical variables used in age-prediction studies. The dataset was not intended to reproduce the statistical distribution of any specific cohort, provide known ground-truth feature effects, or serve as a benchmark dataset. It is used only as a complementary qualitative example. The complete dataset, including the original \texttt{csv dataframe}, is publicly available in the project repository \cite{Github} and is sufficient to reproduce the reported experiments. 

Prior to model training, observations with missing values in any of the input variables were excluded, resulting in a complete-case dataset. The remaining observations were randomly divided into training, validation, and test subsets in proportions $80\%$, $10\%$, and $10\%$, respectively. All subsequent data-dependent preprocessing decisions were made exclusively on the training subset. To reduce redundancy among the input variables, we computed pairwise Pearson correlations on the training data. For every pair of variables whose absolute correlation was at least $0.85$, one variable was removed. The resulting feature set and column order were then fixed and applied unchanged to the validation and test subsets. Finally, the mean and standard deviation used to normalize each input variable were estimated from the training subset only and applied without refitting to the validation and test subsets. The complete preprocessing pipeline is available on GitHub \cite{Github}.

\renewcommand{\arraystretch}{1.3}
\begin{SCtable}[2][!h]
	\centering
	\caption{Summary of the datasets used in the experimental study. For each dataset, we report the number of samples in the train/validation/test split and the number of input features before and after the preprocessing pipeline.}
	\label{tab:datasets}
	
	\begin{tabular}{|l|c|c|}
		\hline
		& \cellcolor{lightgray}\textbf{Dataset 1} &
		\cellcolor{lightgray}\textbf{Dataset 2} \\
		\hline
		\cellcolor{lightgray}\textbf{\# train data} & 12744 & 24000 \\
		\hline
		\cellcolor{lightgray}\textbf{\# validation data} & 1593 & 3000 \\
		\hline
		\cellcolor{lightgray}\textbf{\# test data} & 1594 & 3000 \\
		\hline
		\cellcolor{lightgray}\textbf{\# features before processing} & 75 & 20 \\
		\hline
		\cellcolor{lightgray}\textbf{\# features after processing} & 72 & 18 \\
		\hline
	\end{tabular}
	
\end{SCtable}

\subsection{Model and training setup} 

All the models considered in our experiments share the same general structure: an embedding layer followed by a stack of Transformer encoder blocks and a final linear regression head to generate the predictions. Each scalar input feature $x_j$ is mapped to a token through a feature-specific affine embedding $x_j w_j+e_j$, where $w_j,e_j\in\RR^p$ are learned parameters associated with feature $j$. The embedding dimension is fixed to $p=64$ and each attention block employs $4$ attention heads of dimension $p_h=16$. The feedforward sublayers have hidden dimension $128$ and use $0.1$ dropout. No positional encoding is used because the metabolomic variables have no canonical sequential ordering. Feature identity is nevertheless encoded by the feature-specific parameters $w_j$ and $e_j$; thus, the architecture does not impose a sequence order but is not permutation-invariant with respect to arbitrary reassignment of feature identities.

The main parameter varying across architectures is the number of self-attention layers. More precisely, we train Transformer models with progressively increasing depth, ranging from shallow to deeper configurations. This family of models allows us to study how the attention mechanisms evolve as feature representations propagate through longer sequences of interaction operators. As shown in Section \ref{sec:experiments}, deeper models in the present experiments exhibit increasing row homogenization of the rollout operator.

Training is performed using the Adam optimizer with learning rate $10^{-3}$, weight decay $10^{-5}$, and batch size $128$, using early stopping and \texttt{ReduceLROnPlateau}. The loss function employed is \texttt{SmoothL1Loss}.

Predictive performance is evaluated on the test split using standard regression metrics: Root Mean Squared Error (RMSE), Mean Absolute Error (MAE), and $\text{R}^2$. Since the target variable is normalized during training, RMSE and MAE are computed after denormalizing the predictions and ground-truth labels and are therefore reported in units of years.

Simulations were performed on an Acer TravelMate P414-53 laptop with OS Ubuntu 24.04.3 LTS, 13th Gen Intel\textsuperscript{\textregistered} Core\textsuperscript{\texttrademark} i5-1335U × 12 processor and 32 GiB of RAM memory. Moreover, all experiments were repeated using three independent random seeds. Unless otherwise stated, quantitative performance metrics are reported as mean and standard deviation across the three independent training runs.

\subsection{Interpretability methods}\label{subsec:interpret_methods}

The objective of our empirical analysis is to compare the three viewpoints introduced above. PCA supplies a variance-based score, GradientExplainer supplies an approximate prediction-attribution score, and rollout supplies a propagation score.

\medskip 
\noindent\textbf{Variance viewpoint: PCA}-\textbf{based feature importance.}
As an unsupervised reference, we perform PCA on the normalized data. This provides a description of the intrinsic variance structure of the metabolomic feature space independently of the predictor. Feature importance is quantified through the singular-value-weighted loading score
\begin{align}\label{eq:pca_score_data}
    r_j = \sum_{k=1}^d \omega_k \vert \zeta_{kj}\vert, \quad\text{ for all } j\in\inter{d}, 
\end{align}
which measures the overall contribution of feature $j$ to the principal-component representation of the data, weighting the contribution of each principal component by its singular value. Here,
\begin{displaymath}
    \begin{array}{l}
        \omega_k\in\RR 
        \\
        \zeta_k = (\zeta_{kj})_{j=1}^d\in\RR^d
    \end{array}, \quad \text{ for all } k\in\inter{d}
\end{displaymath}
are the singular values and right singular vectors in the SVD decomposition of the data matrix.

\medskip 
\noindent\textbf{Propagation viewpoint: rollout propagation scores.}
We analyze feature interactions through the global rollout matrix $\*R$ constructed as in \eqref{eq:rollout_global}. Besides computing the rollout propagation scores \eqref{eq:feature_score}, we also quantify the structural organization of the propagation operator by evaluating its Dobrushin coefficient $\kappa(\*R)$ and the approximation error $\|\*R-\Pi_{\*R}\|_{\infty,1}$ appearing in Theorem \ref{thm:rollout}. These quantities measure the extent to which the rollout operator is close to a rank-one propagation profile at each Transformer depth.

\medskip
\noindent\textbf{Attribution viewpoint: GradientExplainer scores.}
We estimate feature-attribution scores with the \texttt{GradientExplainer} implementation of the Python \texttt{SHAP} package \cite{lundberg2017unified}. This implementation computes expected-gradients attributions relative to an empirical background distribution and is used as an approximation to SHAP values for differentiable models. These numerical scores need not coincide exactly with either interventional or conditional Shapley values. The exact interventional Shapley operator used in Proposition \ref{prop:shap} is defined separately in Appendix \ref{app:proof}; that idealized proposition is not a validation theorem for the empirical \texttt{GradientExplainer} scores. The background supplied to \texttt{GradientExplainer} consists of a random subset of $256$ samples from the test set. Methodologically, using the test set as the \texttt{GradientExplainer} background does not introduce information leakage, because the background distribution is used exclusively after the models have been fully trained to compute post hoc SHAP values. It is not involved in training, hyperparameter tuning, model selection, or performance evaluation, and therefore cannot influence the learned models or their predictive performance. Moreover, for our dataset, the choice of training or test background has negligible impact. We verified that the training and test sets are statistically indistinguishable by training a Random Forest classifier to discriminate between them. The classifier achieved an AUC of $0.495 \pm 0.020$, i.e., no better than random guessing, indicating that the two sets are effectively drawn from the same distribution. 

\section{Experimental results}\label{sec:experiments}

We first analyze the real metabolomic dataset (\texttt{Dataset 1}), which serves as the main case study throughout the paper. We then repeat selected analyses on the synthetic \texttt{Dataset 2} as a complementary qualitative example. All experiments presented in this section, together with the corresponding code, are available on \texttt{GitHub} \cite{Github}.

\subsection{Experimental results on Dataset 1}

We present here the results of our experiments performed on \texttt{Dataset 1}.

\subsubsection{Training and predictive performance}\label{subsec:training}

To investigate the effect of depth, we trained Transformer architectures with varying numbers of self-attention layers ranging from $3$ to $20$. The predictive performance obtained on the test set is reported in Table \ref{tab:metrics}.

\renewcommand{\arraystretch}{1.3}
\begin{table*}[!h]
    \centering
    \begin{tabular}{|c|c|c|c|c|}
        \hline \rowcolor{lightgray} \textbf{Depth} & $\text{\textbf{R}}^2$ & \textbf{RMSE} & \textbf{MAE} & \textbf{Best validation loss}
        \\
        \hline $3$ & $0.6942 \pm 0.0048$ & $7.5414 \pm 0.0597$ & $5.8941 \pm 0.0529$ & $0.1574 \pm 0.0017$
        \\
        \hline $5$ & $0.6980 \pm 0.0083$ & $7.4942 \pm 0.1035$ & $5.8787 \pm 0.0623$ & $0.1584 \pm 0.0009$
        \\
        \hline $7$ & $0.6867 \pm 0.0121$ & $7.6322 \pm 0.1474$ & $5.9504 \pm 0.0539$ & $0.1568 \pm 0.0016$
        \\
        \hline $9$ & $0.6866 \pm 0.0019$ & $7.6354 \pm 0.0230$ & $6.0094 \pm 0.0227$ & $0.1577 \pm 0.0012$
        \\
        \hline $11$ & $0.6919 \pm 0.0056$ & $7.5705 \pm 0.0686$ & $5.9114 \pm 0.0496$ & $0.1582 \pm 0.0007$
        \\
        \hline $13$ & $0.6872 \pm 0.0031$ & $7.6277 \pm 0.0380$ & $5.9542 \pm 0.0238$ & $0.1567 \pm 0.0021$
        \\
        \hline $15$ & $0.6861 \pm 0.0059$ & $7.6424 \pm 0.0714$ & $5.9406 \pm 0.0150$ & $0.1569 \pm 0.0032$
        \\
        \hline $17$ & $0.6910 \pm 0.0064$ & $7.5816 \pm 0.0787$ & $5.9594 \pm 0.0511$ & $0.1596 \pm 0.0025$
        \\
        \hline $20$ & $0.6891 \pm 0.0031$ & $7.6043 \pm 0.0377$ & $5.9613 \pm 0.0381$ & $0.1618 \pm 0.0015$             
        \\
        \hline
    \end{tabular}
    \vspace{6pt}
    \caption{Training and predictive performance over \texttt{Dataset 1} of Transformer architectures with increasing depth. The table reports predictive accuracy on the test split for Transformer models with different numbers of self-attention layers. Results are presented as the mean $\pm$ standard deviation over independent runs using different random seeds, summarizing the variability observed across initializations. The reported metrics include the coefficient of determination $\text{R}^2$, Root Mean Squared Error (RMSE), Mean Absolute Error (MAE), and the best validation loss achieved during training.} \label{tab:metrics}
\end{table*}

Overall, performance remains stable across the considered depths. The coefficient of determination stays in the approximate range $\text{R}^2\in [0.68,0.70]$, while RMSE and MAE vary only moderately.

Figure \ref{fig:losses} complements this analysis by reporting the training and validation losses for representative shallow, intermediate, and deep Transformer architectures. All displayed curves converge rapidly during the first epochs and do not show a large sustained separation between training and validation loss.

\begin{figure*}[!h]
	\centering
	\includegraphics[width=0.9\textwidth]{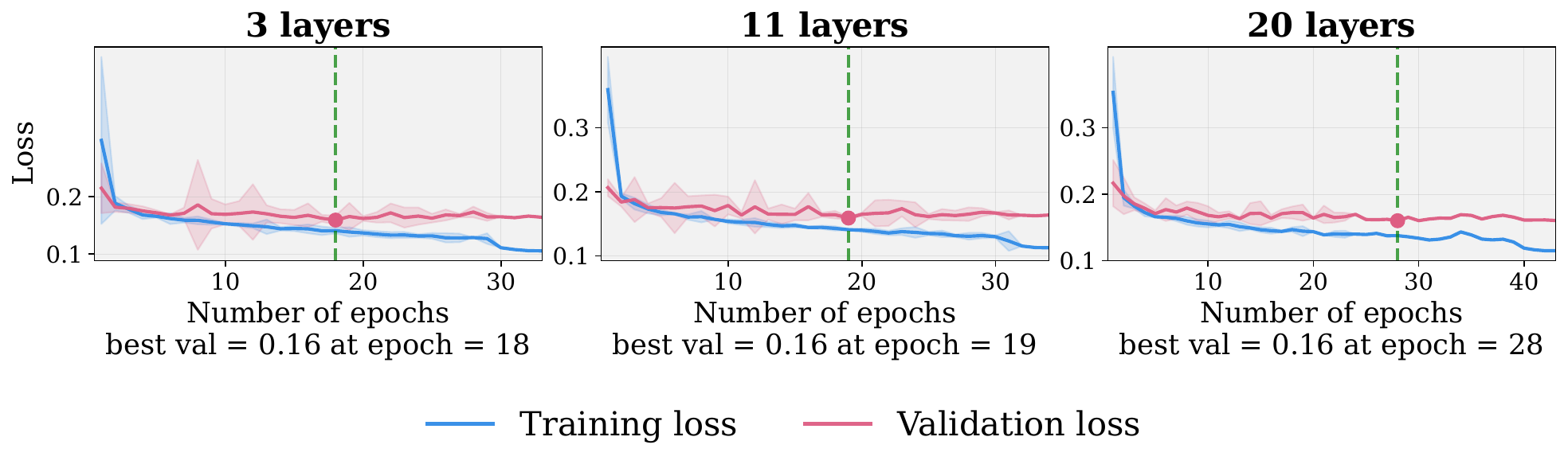}	
	\caption{Evolution of training and validation losses during optimization over \texttt{Dataset 1} for representative shallow (3-layer), intermediate (11-layer), and deep (20-layer) Transformer architectures. Shaded regions indicate $\pm 1$ standard deviation over three runs.}\label{fig:losses}
\end{figure*}

Across the depths considered, the observed differences in predictive performance are modest relative to the overall performance level, and no monotone improvement with depth is apparent. These results are descriptive and are not intended to establish statistical equivalence among architectures.

This observation motivates the analysis developed in Sections \ref{subsect:attn_iterpret}--\ref{subsec:feature_importance}: while the reported predictive performance remains broadly stable, the measured attention-propagation patterns continue to change with depth.

\subsubsection{Attention-based analysis}\label{subsect:attn_iterpret}

Motivated by the structural characterization provided by Theorem \ref{thm:rollout} and by the depth-dependent contraction estimate \eqref{eq:dobrushin_R_est_global} proved in Appendix \ref{app:proof}, we now examine how the corresponding column-wise organization is manifested in the trained Transformers.

Figure \ref{fig:attn_rollout} displays the global rollout operator together with its rank-one approximation and the associated error. Experimentally, as the depth increases, the rollout operator becomes increasingly well described by a common propagation profile. For each depth, the corresponding approximation is quantitatively characterized by Theorem \ref{thm:rollout} through the measured Dobrushin coefficient. The learned propagation profile remains nonuniform, showing empirically that some source features contribute more strongly to the propagation dynamics than others. Notice that Theorem \ref{thm:rollout} does not predict this nonuniformity; it is an empirical property of the learned rollout operator.

\begin{figure*}[!h]
	\centering	
    \includegraphics[width=0.9\textwidth]{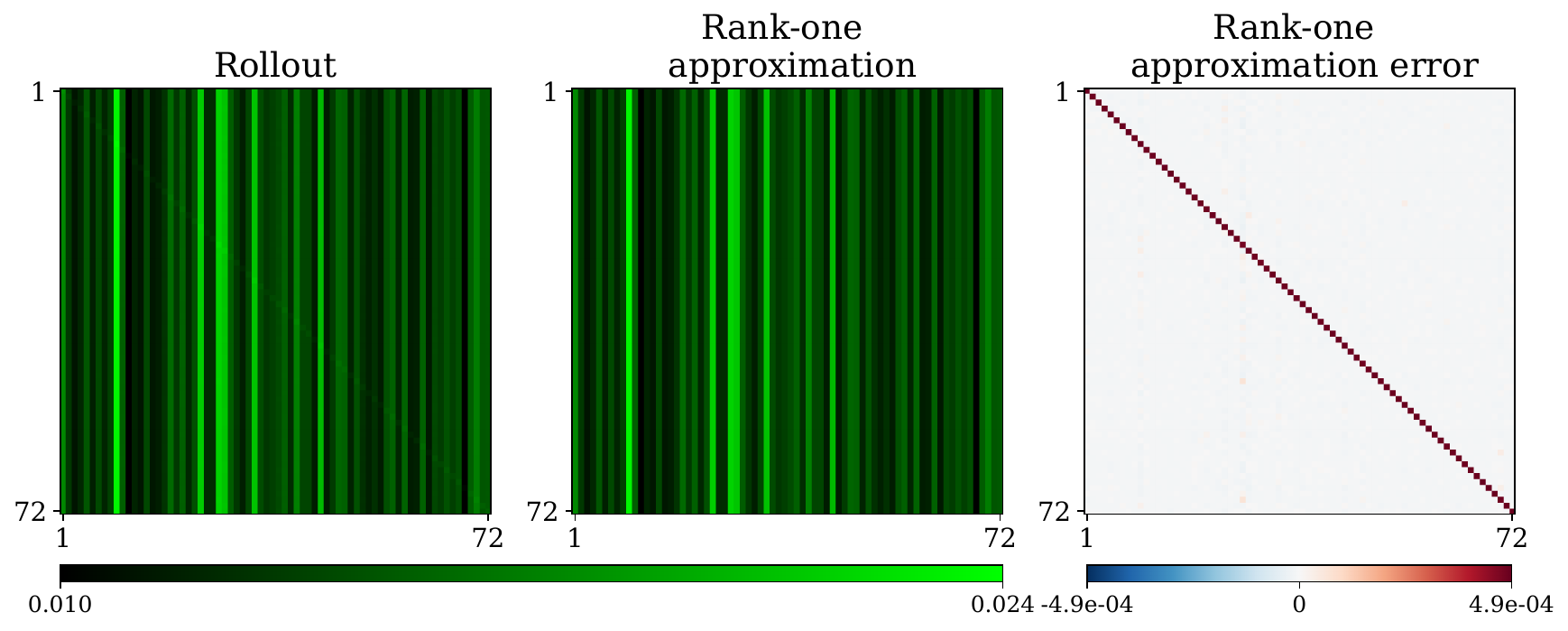}    
	\caption{Global rollout matrix \eqref{eq:rollout_global} computed over the test split of \texttt{Dataset 1} for one representative trained 11-layer Transformer realization, together with its rank-one approximation and the corresponding approximation error. The small residual error illustrates that the rollout operator is well approximated by a rank-one stochastic matrix, in accordance with Theorem \ref{thm:rollout}.}\label{fig:attn_rollout}
\end{figure*}

Figure \ref{fig:dobrushin} reports the Dobrushin contraction coefficient $\kappa(\*R)$ together with the approximation error $\|\*R-\Pi_{\*R}\|_{\infty,1}$. By Theorem \ref{thm:rollout}, these two quantities are analytically equivalent up to fixed multiplicative constants for every rollout operator. Moreover, estimate \eqref{eq:dobrushin_R_est_global} controls the Dobrushin coefficient of the global rollout, making explicit the factors associated with residual averaging and the layerwise attention matrices. In the architectures considered here, both the Dobrushin coefficient and the approximation error are observed experimentally to decrease rapidly with depth.

\begin{figure}[t]
	\centering
	
	\begin{minipage}[c]{0.47\columnwidth}
		\centering
		\includegraphics[width=\linewidth]{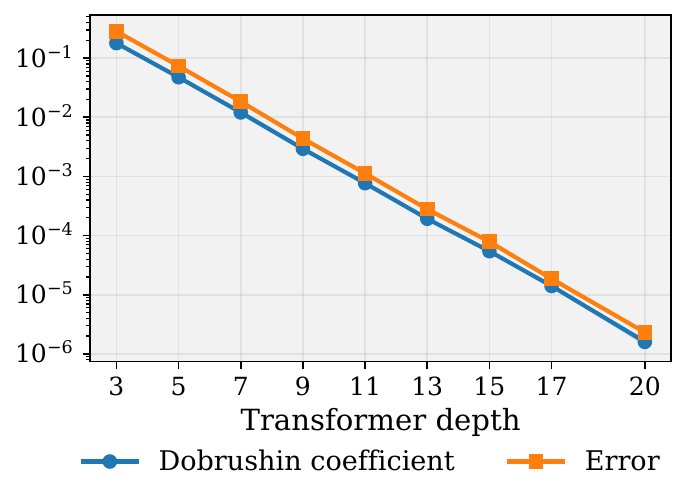}
	\end{minipage}
	\hfill
	\begin{minipage}[c]{0.50\columnwidth}
		\centering
		\vspace{-1.2cm}
		{\setlength{\tabcolsep}{3pt}
			\renewcommand{\arraystretch}{1.1}
			\scriptsize
			        \begin{tabular}{|c|c|c|}
				\hline
				\rowcolor{lightgray}\textbf{Depth} & \textbf{Dobrushin coefficient $\kappa(\*R)$} & \textbf{Error} $\|\*R-\Pi_{\*R}\|_{\infty,1}$ 
				\\
				\hline\hline $3$ & $1.7\times 10^{-1} \pm 5.5 \times 10^{-5}$ & $2.8\times10^{-1} \pm 5.9 \times 10^{-3}$ 
				\\
				$5$ & $4.7\times 10^{-2} \pm 3.5 \times 10^{-3} $ & $7.3\times10^{-2} \pm 1.4 \times 10^{-3}$ 
				\\
				$7$ & $1.2\times 10^{-2} \pm 1.2 \times 10^{-3}$ & $1.8\times10^{-2} \pm 1.6 \times 10^{-3}$ 
				\\
				$9$ & $2.9\times 10^{-3} \pm 2.5 \times 10^{-4}$ & $4.3\times10^{-3} \pm 1.9 \times 10^{-4}$ 
				\\
				$11$ & $7.6\times 10^{-4} \pm 1.2 \times 10^{-5}$ & $1.1\times10^{-3} \pm 1.1 \times 10^{-5}$ 
				\\
				$13$ & $1.9\times 10^{-4} \pm 1.1 \times 10^{-5}$ & $2.7\times10^{-4} \pm 3.8 \times 10^{-6}$ 
				\\
				$15$ & $5.4\times 10^{-5} \pm 3.1 \times 10^{-6}$ & $7.8\times10^{-5} \pm 4.6 \times 10^{-7}$ 
				\\    
				$17$ & $1.4\times 10^{-5} \pm 7.1 \times 10^{-7}$ & $1.8\times10^{-5} \pm 2.3 \times 10^{-7}$ 
				\\
				$20$ & $1.5\times 10^{-6} \pm 2.2 \times 10^{-7}$ & $2.3\times10^{-6}\pm 1.2 \times 10^{-7}$ 
				\\
				\hline
			\end{tabular}        
		}
	\end{minipage}
	
	\caption{Decay of the Dobrushin contraction coefficient $\kappa(\*R)$ and the approximation error $\|\*R-\Pi_{\*R}\|_{\infty,1}$ as functions of the Transformer depth on \texttt{Dataset 1}. The table reports the corresponding mean $\pm$ standard deviation over independent training runs. Theorem \ref{thm:rollout} guarantees that these two quantities remain quantitatively equivalent up to fixed multiplicative constants for every rollout operator.}\label{fig:dobrushin}
\end{figure}

The observed decrease in the trained models is compatible with the product mechanism identified in Appendix \ref{app:proof}, but the present measurements do not quantify the separate contribution of training to contraction. Section \ref{subsec:untrain} therefore addresses a narrower empirical question by comparing the propagation profiles of trained and randomly initialized Transformers.

\subsubsection{Effect of learning on the propagation profile}\label{subsec:untrain}

The previous section documented depth-dependent contraction for the trained Transformers. We now address a separate question: whether trained and randomly initialized models exhibit different propagation profiles. The comparison below concerns the profiles and their rankings; it does not compare their Dobrushin coefficients.

Our analysis is based on two complementary experiments. First, we compare the propagation profiles $v$ obtained from trained and randomly initialized (untrained) models using their $\ell^1$ distance. Second, we measure the $\ell^1$ distance of each profile from the uniform distribution $u=d^{-1}\*1$. Together, these descriptive quantities assess how the distribution of rollout mass differs between the two groups.

The results are reported in Tables \ref{tab:l1_profile_distance} and \ref{tab:uniform_profile_distance}. Table \ref{tab:l1_profile_distance} shows that the trained and randomly initialized profiles differ across all considered depths, with an $\ell^1$ distance of approximately $0.15$ in these experiments. This comparison does not by itself establish the cause of the difference or whether the resulting ranking is predictively relevant.

Table \ref{tab:uniform_profile_distance} compares each propagation profile with the uniform distribution. The reported randomly initialized profiles remain close to uniform, whereas the trained profiles have larger distances from uniformity. This descriptive comparison indicates more heterogeneous rollout profiles in the trained models considered here.

\begin{table}[ht]
\centering

\begin{minipage}[t]{0.44\textwidth}
\centering
\renewcommand{\arraystretch}{1.15}
\begin{tabular}{|c|c|}
    \hline
    \rowcolor{lightgray}\textbf{Depth} & \textbf{$\ell^1$ distance} \\
    \hline
    3  & $0.1438 \pm 0.0006$ \\
    5  & $0.1644 \pm 0.0280$ \\
    7  & $0.1529 \pm 0.0212$ \\
    9  & $0.1452 \pm 0.0181$ \\
    11 & $0.1511 \pm 0.0038$ \\
    13 & $0.1489 \pm 0.0062$ \\
    15 & $0.1618 \pm 0.0164$ \\
    17 & $0.1573 \pm 0.0114$ \\
    20 & $0.1318 \pm 0.0087$ \\
    \hline
\end{tabular}

\captionof{table}{$\ell^1$ distance between the propagation profiles $v$ of trained and randomly initialized Transformer models as a function of the depth. Results are reported as mean $\pm$ standard deviation over independent random seeds.}
\label{tab:l1_profile_distance}
\end{minipage}
\hfill
\begin{minipage}[t]{0.53\textwidth}
\centering
\renewcommand{\arraystretch}{1.15}
\begin{tabular}{|c|c|c|}
    \hline
    \rowcolor{lightgray}\textbf{Depth} & \textbf{Trained} & \textbf{Random init.} \\
    \hline
    3  & $0.1434 \pm 0.0015$ & $0.018273 \pm 0.001544$ \\
    5  & $0.1616 \pm 0.0308$ & $0.018319 \pm 0.001459$ \\
    7  & $0.1506 \pm 0.0201$ & $0.018258 \pm 0.001479$ \\
    9  & $0.1428 \pm 0.0188$ & $0.018272 \pm 0.001484$ \\
    11 & $0.1511 \pm 0.0022$ & $0.018272 \pm 0.001487$ \\
    13 & $0.1469 \pm 0.0063$ & $0.018272 \pm 0.001487$ \\
    15 & $0.1606 \pm 0.0168$ & $0.018272 \pm 0.001487$ \\
    17 & $0.1553 \pm 0.0097$ & $0.018272 \pm 0.001487$ \\
    20 & $0.1314 \pm 0.0073$ & $0.018272 \pm 0.001487$ \\
    \hline
\end{tabular}

\captionof{table}{$\ell^1$ distance between the propagation profile and the uniform distribution for trained and randomly initialized Transformer models as a function of the depth. Results are reported as mean $\pm$ standard deviation over independent random seeds.}
\label{tab:uniform_profile_distance}
\end{minipage}
\end{table}

To illustrate how the profile differences appear in the rollout ranking, Figure \ref{fig:untrained_rollout} compares the six highest-ranked variables obtained from trained and randomly initialized Transformers for the representative 11-layer architecture. We stress that this comparison is intended only as a visual illustration of the quantitative differences reported in Tables \ref{tab:l1_profile_distance} and \ref{tab:uniform_profile_distance}.

\begin{SCfigure}[0.5][ht]
	\centering
	\includegraphics[width=0.65\linewidth]{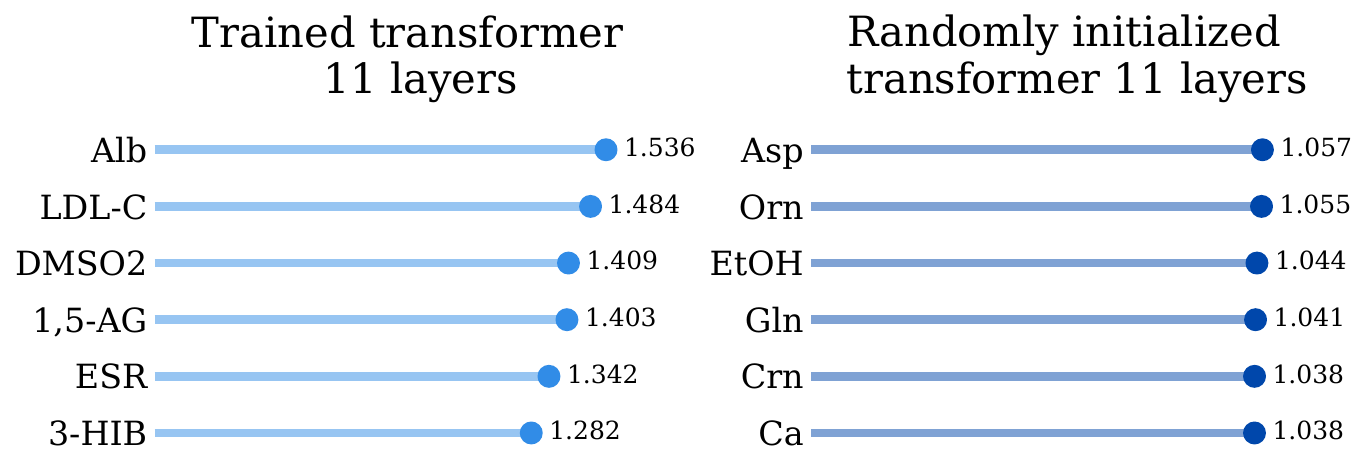}
	
	\caption{Comparison of rollout propagation scores over \texttt{Dataset 1} between trained and randomly initialized Transformers with 11 attention layers.}
	\label{fig:untrained_rollout}
\end{SCfigure}

Consistent with the profile comparison, the trained and randomly initialized models produce different rollout rankings. Some variables highly ranked after training, including albumin and LDL cholesterol, are established clinical variables discussed in Appendix \ref{app:bio}. This provides a qualitative plausibility check only: it does not demonstrate that rollout scores are faithful, causal, or predictively relevant.

These results show that the propagation profile and its ranking differ between trained and randomly initialized models. Together with the product analysis of Section \ref{app:product}, they motivate a distinction between the structural contraction mechanism of rollout composition and the profile realized after optimization. Establishing a causal relation between training, the profile, and predictive behavior requires additional intervention-based experiments.

\subsubsection{Comparison of feature-ranking scores}\label{subsec:feature_importance}

Section \ref{subsec:untrain} documented differences between the propagation profiles and rankings of the trained and randomly initialized models. We now compare three complementary scoring systems for the trained Transformer:
\begin{itemize}
    \item the PCA scores defined in \eqref{eq:pca_score_data}, obtained from the singular-value-weighted principal-component representation of the data;
    \item the rollout column-sum scores defined in \eqref{eq:feature_score}, measuring the rollout mass associated with each source feature;
    \item GradientExplainer scores, providing approximate prediction attributions relative to the selected background distribution.
\end{itemize}

Before comparing the different interpretability methods, we describe the cross-seed stability of the corresponding feature rankings. Figure \ref{fig:ranking_stability} reports the mean pairwise Spearman correlation of the rollout and GradientExplainer rankings across independent training runs.

\begin{SCfigure}[0.5][ht]
	\centering
	\includegraphics[width=0.65\columnwidth]{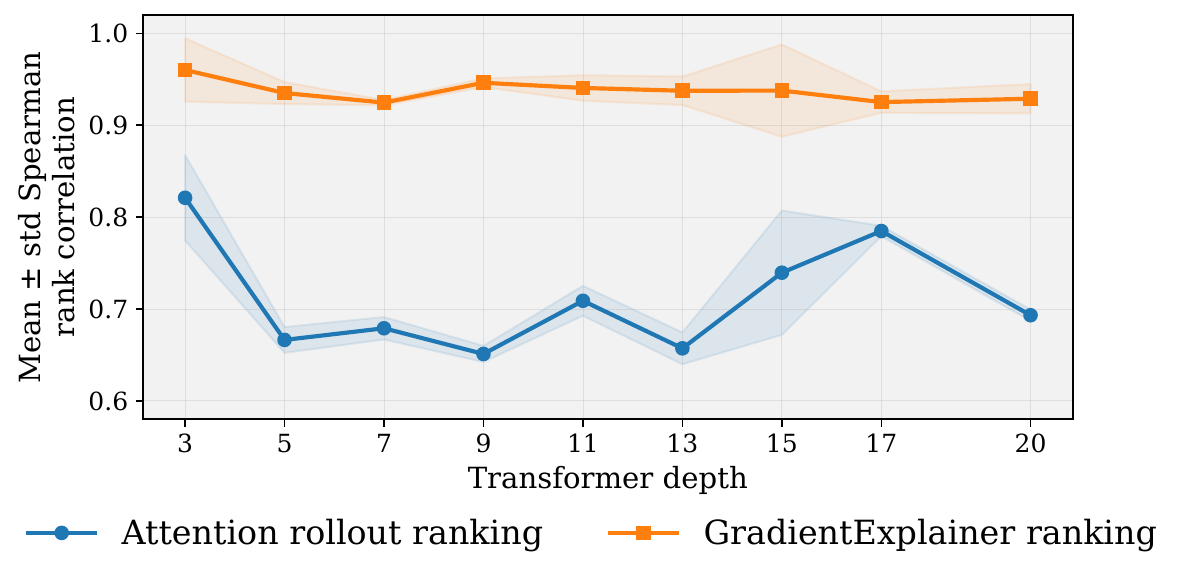}    
	\caption{Cross-seed stability of rollout- and GradientExplainer-based feature rankings over \texttt{Dataset 1}. For each Transformer depth, the figure reports the mean pairwise Spearman correlation between rankings obtained from independent training runs with different random seeds. The solid curves represent the mean across runs, and the shaded bands indicate one standard deviation.}\label{fig:ranking_stability}
\end{SCfigure}


Within the three reported initializations, the GradientExplainer rankings have higher mean pairwise Spearman correlations than the rollout rankings. The latter range from approximately $0.65$ to $0.80$. These values describe cross-seed stability in the present experiment; they do not establish population-level reproducibility.

Figure \ref{fig:importance} reports the six highest-ranked variables according to these scores for one representative trained Transformer of depth $11$. This intermediate depth is used throughout the qualitative illustrations because its measured Dobrushin coefficient is already small and its rollout operator already exhibits the rank-one propagation structure characterized by Theorem \ref{thm:rollout}. Table \ref{tab:features_name} reports the meaning of the feature abbreviations.

\begin{figure*}
	\centering
	\includegraphics[width=0.9\textwidth]{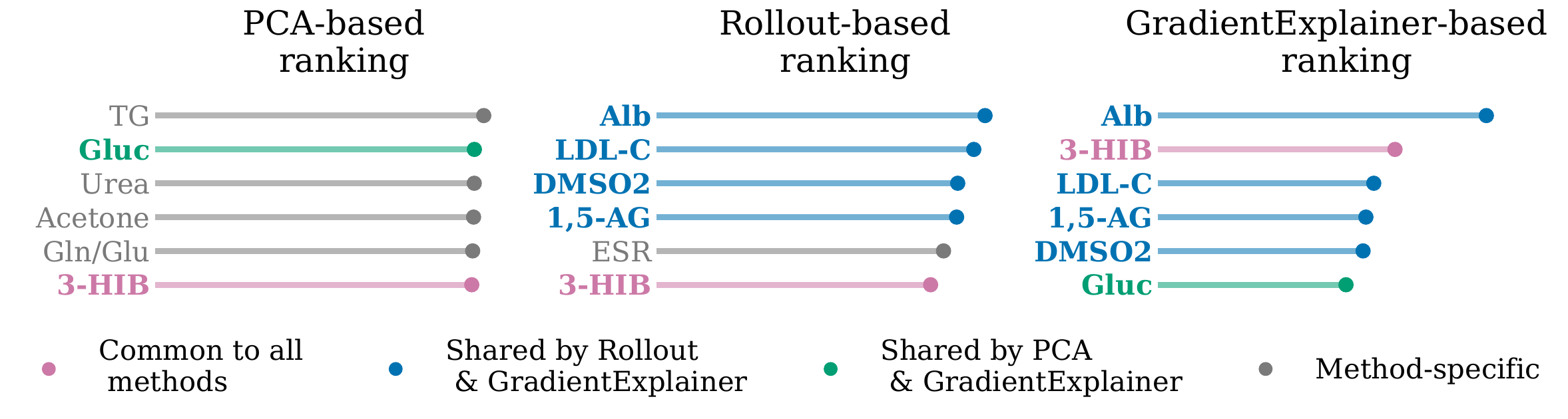}
	\caption{Comparison of top-$6$ rankings for \texttt{Dataset 1} obtained with PCA scores, rollout propagation scores, and GradientExplainer feature-attribution scores.}\label{fig:importance}
\end{figure*}

\begin{table*}[!h]
	\centering	
	\begin{tabular}{|c|c|c|} \hline
		\rowcolor{lightgray} \textbf{Feature} & \textbf{Abbreviation} & \textbf{Feature type} 
		\\
		\hline \hline 1,5-Anhydrosorbitol & 1,5-AG & Metabolite 
		\\
		3-Hydroxyisobutyric acid & 3-HIB & Metabolite 
        \\
		Acetone & Acetone & Metabolite 
        \\
		Albumin & Alb & Protein 
		\\
		Dimethylsulfone & DMSO2 & Metabolite 
        \\
		Erythrocyte Sedimentation Rate & ESR & Inflammatory marker 		 
        \\
		  Glucose & Gluc & Metabolite 
        \\
		  Glutamine to Glutamate ratio & Gln/Glu & Derived ratio 
		\\ 
		LDL Cholesterol & LDL-C & Clinical biomarker 
        \\
		Triglycerides & TG & Clinical biomarker 		
        \\
		Urea & Urea & Clinical biomarker 		
		\\
		\hline 
	\end{tabular}  
    \vspace{0.3cm}
	\caption{Correspondence between feature abbreviations and full variable names in Figure \ref{fig:importance}. Variables are grouped by category (metabolites, clinical biomarkers, proteins, inflammatory markers and derived ratios).} \label{tab:features_name}
\end{table*}

The PCA ranking differs substantially from both model-dependent rankings. The rollout and attribution rankings, by contrast, share several features. This difference is unsurprising because PCA ranks variables according to the variance structure of the input data alone and therefore has no access to the prediction task. This is also reflected in the nearly uniform PCA scores shown in Figure \ref{fig:importance}, where the highest-ranked variables receive very similar scores.

To compare the three interpretability methods quantitatively, Figure \ref{fig:agreement} summarizes their agreement across Transformer depths. 

The upper panel reports the normalized overlap between the six highest-ranked features identified by each pair of methods, averaged over three random initializations. The lower panel reports the Spearman correlation between the complete rankings, likewise averaged over the three initializations. Both panels are descriptive summaries of one cohort and a small number of training runs.

\begin{figure*}
	\centering
    \begin{minipage}[c]{0.49\textwidth}
        \centering
        \includegraphics[width=\linewidth]{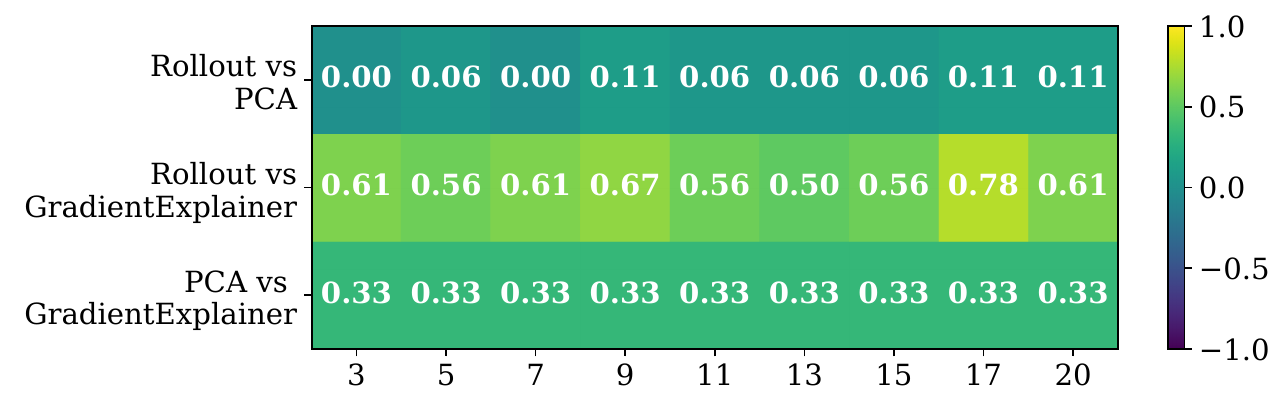}
    \end{minipage}
    \hfill
    \begin{minipage}[c]{0.49\textwidth}
        \centering
        \includegraphics[width=\linewidth]{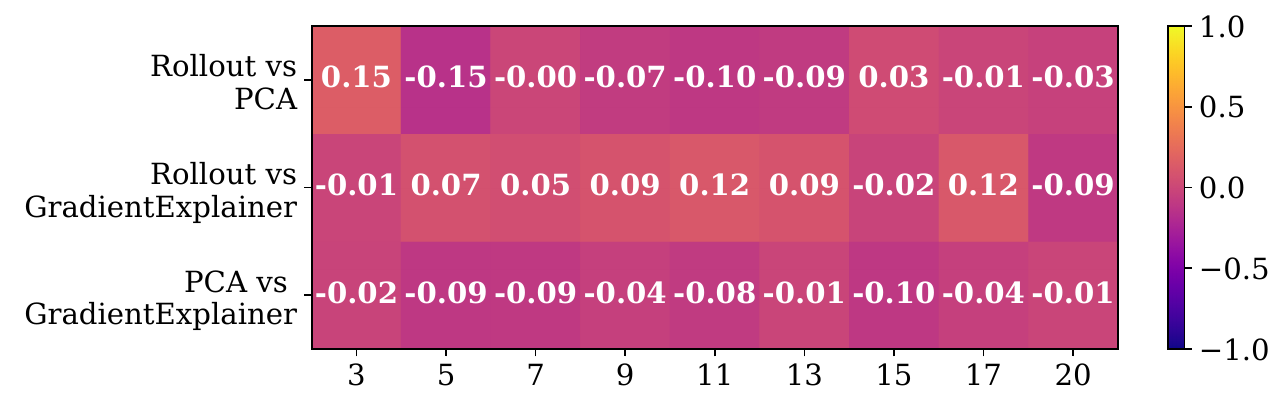}
    \end{minipage}
	\caption{Descriptive comparison of rankings obtained with PCA scores, rollout propagation scores, and GradientExplainer feature-attribution scores over \texttt{Dataset 1}. The left panel reports normalized top-$6$ overlap, and the right panel reports Spearman correlation across all $72$ variables. Values are averaged over three random initializations.}
	\label{fig:agreement}
\end{figure*}

The comparisons involving PCA exhibit weak descriptive agreement. Rollout and PCA share few top-ranked variables, while GradientExplainer and PCA share, on average, approximately two of their six highest-ranked variables. These observations are summaries of the present cohort and training runs, not evidence of generalizable agreement.

Rollout and GradientExplainer identify between two and five common variables among their six highest-ranked features across the individual runs, with average normalized overlaps ranging from $0.50$ to $0.78$ across depths. Thus, in this dataset, their agreement is most visible at the top of the rankings. This descriptive comparison is sensitive to the selected top-$6$ threshold.

When the comparison is extended to the complete rankings, the Spearman correlations remain close to zero for all method pairs and depths. The top-$6$ overlap and the full-ranking comparison therefore describe different aspects of the same results: rollout and GradientExplainer share some highly ranked variables in this cohort while assigning substantially different orders to many variables of intermediate or low rank.

\subsection{Experimental results on Dataset 2}

As an auxiliary illustration, we repeat the main propagation analyses on the synthetic \texttt{Dataset 2}. This analysis is not intended to test generalization beyond \texttt{Dataset 1}; it only shows how the same diagnostics behave in a second tabular example.

The predictive performance obtained on \texttt{Dataset 2} is reported in Table \ref{tab:metrics_ds2}. As in \texttt{Dataset 1}, the point estimates vary little across the considered depths; no monotone association between depth and predictive performance is apparent.

\renewcommand{\arraystretch}{1.3}
\begin{table*}[!h]
    \centering
    \begin{tabular}{|c|c|c|c|c|}
        \hline \rowcolor{lightgray} \textbf{Depth} & $\text{\textbf{R}}^2$ & \textbf{RMSE} & \textbf{MAE} & \textbf{Best validation loss}
        \\
        \hline $3$ & $0.9035 \pm 0.0005$ & $6.4480 \pm 0.0152$ & $5.1568 \pm 0.0264$ & $0.0504 \pm 0.0001$
        \\
        \hline $5$ & $0.9039 \pm 0.0001$ & $6.4334 \pm 0.0045$ & $5.1136 \pm 0.0081$ & $0.0501 \pm 0.0001$
        \\
        \hline $7$ & $0.9039 \pm 0.0001$ & $6.4352 \pm 0.0013$ & $5.1140 \pm 0.0043$ & $0.0501 \pm 0.0001$
        \\
        \hline $9$ & $0.9041 \pm 0.0005$ & $6.4295 \pm 0.0159$ & $5.1076 \pm 0.0061$ & $0.0511 \pm 0.0004$
        \\
        \hline $11$ & $0.9042 \pm 0.0004$ & $6.4247 \pm 0.0122$ & $5.1229 \pm 0.0339$ & $0.0501 \pm 0.0002$
        \\
        \hline $13$ & $0.9037 \pm 0.0008$ & $6.4408 \pm 0.0271$ & $5.1403 \pm 0.0423$ & $0.0500 \pm 0.0002$
        \\
        \hline $15$ & $0.9032 \pm 0.0007$ & $6.4572 \pm 0.0228$ & $5.1402 \pm 0.0297$ & $0.0503 \pm 0.0005$
        \\
        \hline $17$ & $0.9034 \pm 0.0007$ & $6.4526 \pm 0.0251$ & $5.1540 \pm 0.0246$ & $0.0505 \pm 0.0002$
        \\
        \hline $20$ & $0.9034 \pm 0.0001$ & $6.4502 \pm 0.0042$ & $5.1386 \pm 0.0117$ & $0.0502 \pm 0.0001$
        \\
        \hline
    \end{tabular}
    \vspace{6pt}
    \caption{Training and predictive performance over \texttt{Dataset 2} of Transformer architectures with increasing depth. The table reports predictive accuracy on the test split for Transformer models with different numbers of self-attention layers. Results are presented as the mean $\pm$ standard deviation over independent runs using different random seeds, summarizing the variability observed across initializations. The reported metrics include the coefficient of determination $\text{R}^2$, Root Mean Squared Error (RMSE), Mean Absolute Error (MAE), and the best validation loss achieved during training.} \label{tab:metrics_ds2}
\end{table*}

While the predictive point estimates vary little, Figures \ref{fig:attn_rollout_ds2} and \ref{fig:dobrushin_ds2} show that the measured rollout matrices become more contractive with depth. This is a similar qualitative depth-dependent pattern to that observed for \texttt{Dataset 1}.
\begin{figure*}[!h]
	\centering
	\includegraphics[width=0.9\textwidth]{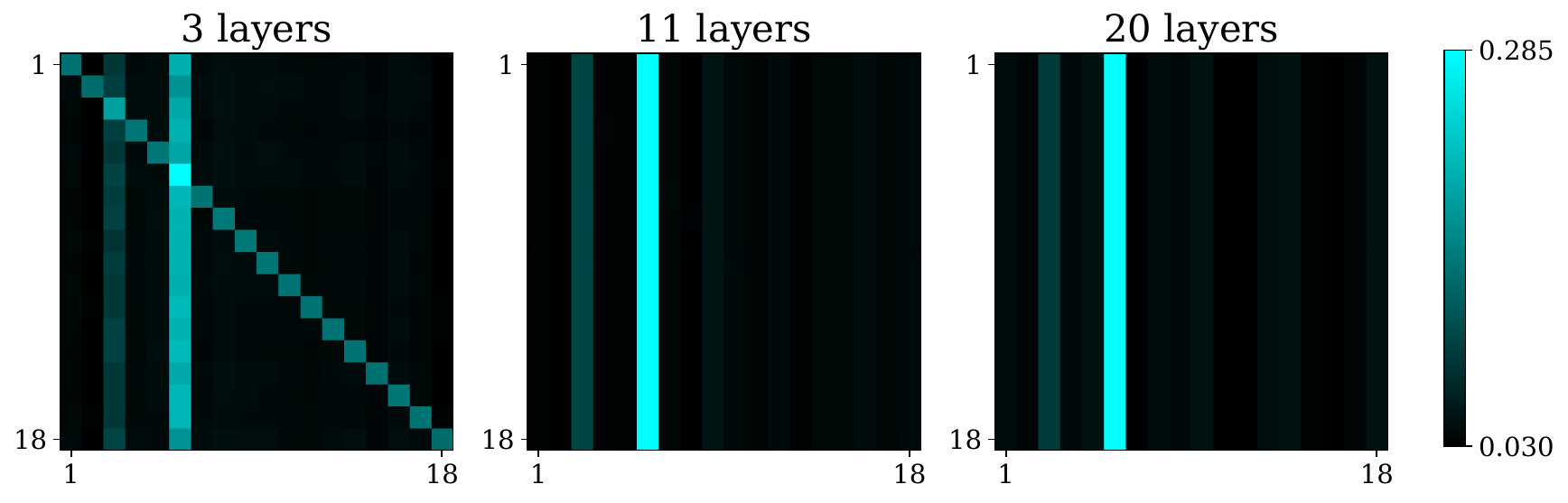}
	\caption{Global rollout matrices \eqref{eq:rollout_global} computed over the test split of \texttt{Dataset 2} for shallow (3-layer), intermediate (11-layer), and deep (20-layer) Transformers.}\label{fig:attn_rollout_ds2}
\end{figure*}

\begin{SCfigure}[0.6][ht]
	\centering
	\includegraphics[width=0.5\columnwidth]{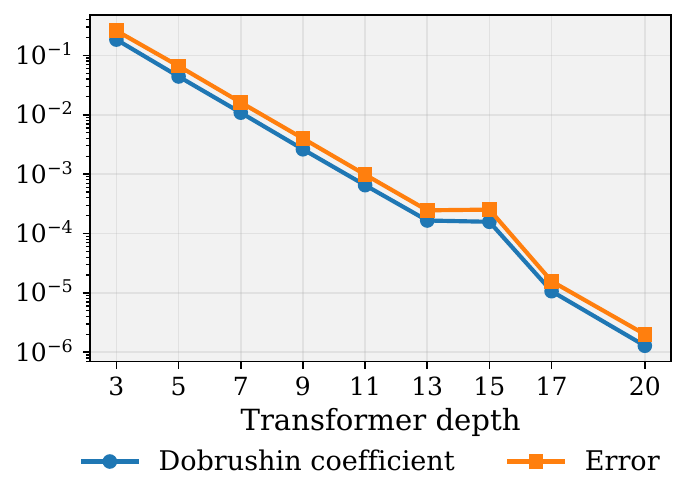}
	\caption{Evolution of the Dobrushin contraction coefficient $\kappa(\*R)$ and the approximation error $\|\*R-\Pi_{\*R}\|_{\infty,1}$ of the global rollout operator as functions of the Transformer depth on \texttt{Dataset 2}. Both quantities decrease rapidly with depth, showing that the rollout operator becomes progressively closer to its rank-one approximation, consistently with Theorem \ref{thm:rollout} and the depth-dependent estimate in Appendix \ref{app:proof}.}\label{fig:dobrushin_ds2}
\end{SCfigure}

After documenting the contraction and rank-one approximation pattern in this auxiliary example, we compare the three analytical viewpoints descriptively. Figure \ref{fig:agreement_ds2} reports the Spearman correlations between the rollout, GradientExplainer, and PCA rankings for Transformer architectures of increasing depth.

\begin{SCfigure}[0.5][ht]
	\centering
	\includegraphics[width=0.65\columnwidth]{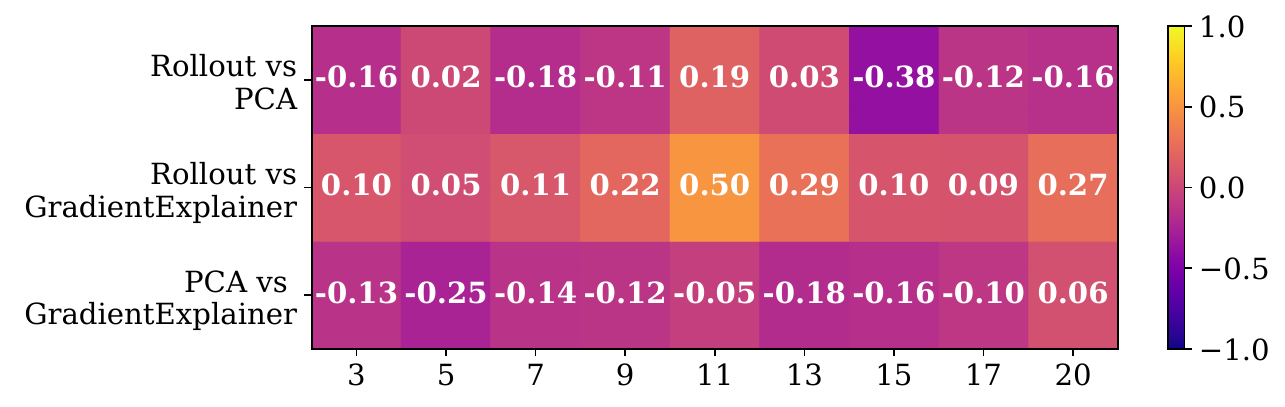}
	\caption{Comparison of rankings obtained with PCA scores, rollout propagation scores, and GradientExplainer feature-attribution scores over \texttt{Dataset 2} for Transformer architectures of varying depth. The heatmap reports the Spearman correlation computed over the complete rankings of all $18$ features.}\label{fig:agreement_ds2}
\end{SCfigure}

In this synthetic example, correlations involving PCA are weak, while rollout and GradientExplainer rankings have moderate positive agreement at several Transformer depths and remain far from identical. These observations describe this generated dataset only.

Overall, \texttt{Dataset 2} shows two patterns similar to the point estimates in \texttt{Dataset 1}: progressive contraction of the trained rollout operator and a non-uniform propagation profile. Because the dataset was generated by a language model rather than a fully specified stochastic generator with known ground truth, this experiment is an illustration rather than an independent validation or evidence of generality.

\subsection{Limitations of the empirical analysis}\label{subsec:limitations}

The empirical results should be interpreted within several limitations. First, attention rollout represents only propagation mediated by attention. It omits value and output projections, normalization layers, MLP sublayers, and other nonlinear transformations in the complete Transformer. Consequently, rollout propagation scores are neither causal effects nor guaranteed faithful attributions of the prediction. Second, the product analysis provides a layerwise upper bound on rollout contraction but does not by itself prove decay with depth; moreover, the experiments report trained contraction only, while the trained--random-initialization comparison concerns propagation profiles rather than contraction coefficients. Intervention-based controls would be required to establish the predictive relevance of individual rollout-ranked variables. Third, the real-data analysis uses a single biomedical cohort and three random initializations, so the ranking comparisons are descriptive and exploratory. Fourth, \texttt{Dataset 2} was generated with a language model rather than a fully specified stochastic data-generating process and does not provide ground-truth feature effects. Finally, the GradientExplainer background used in the reported experiments was sampled from the test set. It was employed exclusively for post-hoc interpretability after model training and was not involved in model fitting, hyperparameter selection, or checkpoint selection. Accordingly, the conclusions of this work concern the mathematical characterization and empirical illustration of the propagation encoded by attention rollout.

\section{Conclusions and perspectives}\label{sec:conclusions}

We introduced \emph{interior interpretability} as the analysis of explicitly defined operators that summarize selected interactions inside a learned architecture, and instantiated this perspective with attention rollout in tabular Transformers. The scope of the resulting explanation is operator-specific: rollout describes attention-mediated propagation, not the complete hidden-state dynamics and not a causal decomposition of the prediction.

Using a direct finite-dimensional consequence of classical Doeblin--Dobrushin theory, we characterized the rank-one regime of any row-stochastic rollout operator. Its Dobrushin coefficient controls the distance to a canonical rank-one matrix, and the normalized column masses are exactly the mean row profile. This identifies two mathematically distinct aspects of rollout: contraction determines how much destination-dependent variation is lost, whereas the profile records how the remaining propagated mass is distributed among source features. Neither rank-one structure nor a nonuniform profile, by itself, establishes learning, predictive relevance, or attribution faithfulness.

In the trained tabular Transformers studied here, the measured Dobrushin coefficient and rank-one approximation error decreased with depth. The trained and randomly initialized models also displayed different propagation profiles in the reported runs, but this experiment does not isolate the cause or predictive consequence of those differences. On \texttt{Dataset 1}, exploratory comparisons with PCA and GradientExplainer scores revealed localized overlap among some highly ranked variables and weak agreement across complete rankings. These observations motivate using rollout as a complementary diagnostic of propagation geometry while underscoring that variance, propagation, and prediction attribution answer different questions.

The broader methodological lesson is that a defensible interpretation of an internal quantity requires its underlying operator, assumptions, invariances, scope, and failure modes to be made explicit. This principle suggests a research program with four priorities:
\begin{enumerate}
    \item construct computation-aware propagation operators that incorporate value and output projections, residual paths, normalization, and MLP transformations, for example through local Jacobian or transport formulations;
    \item formulate propagation--attribution relationships as explicit, falsifiable hypotheses, supported by interventions and counterexamples rather than ranking agreement alone;
    \item quantify the formation, stability, and uncertainty of propagation profiles across initializations, subjects, cohorts, architectures, and appropriate random-model baselines;
    \item extend the contraction--profile decomposition to local, subgroup-specific, continuous-depth, and mean-field operators, where population heterogeneity and depth scaling can be studied directly.
\end{enumerate}

Interior interpretability is therefore proposed not as a universal feature-importance score, but as an operator-first framework for deriving mathematically scoped claims and empirical tests that can be falsified, refined, and compared across models.

\section*{Acknowledgments} The authors would like to express their sincere gratitude to Alain Ib\'a\~{n}ez de Opakua, Jos\'e Mar\'ia Mato and \'Oscar Millet (ATLAS Molecular Pharma, CIC bioGUNE, and CIBERehd) for providing the biomedical \texttt{Dataset 1} used in the experimental part of this work. The experimental investigations presented in this paper would not have been possible without their contribution in collecting, curating, and organizing the dataset, as well as their guidance concerning the biomedical context of the selected variables. The authors also wish to thank Ziqian Li (Chair for Dynamics, Control, Machine Learning, and Numerics at the Friedrich-Alexander-Universit\"at Erlangen-N\"urnberg) for his valuable assistance during the final revisions of this work.


\section*{Data availability} The data contained in \texttt{Dataset 1} were collected through individual medical examinations and are not publicly available due to privacy and ethical restrictions. Access to anonymized data may be granted upon reasonable request and subject to the approval of the relevant data-governance and ethical procedures. \texttt{Dataset 2} instead is available on the GitHub repository associated with this paper \cite{Github}.

\appendix
\section{Mathematical results}\label{app:proof}

This appendix collects the mathematical results supporting our discussion. 

\subsection{Structural properties of attention rollout} 

We first establish structural statements that apply to any row-stochastic propagation operator. They therefore apply both to each local rollout matrix $R(x)$ defined in \eqref{eq:rollout} and to the dataset-average global rollout $\*R$ defined in \eqref{eq:rollout_global}. Only the later depth-dependent estimate uses the product representation of the local rollout. Throughout the first part of the appendix, we consider a generic matrix
\begin{align}\label{eq:rollout_gen}
    \MC R\in\RR^{d\times d},
    \qquad&
    \MC R_{ij}\geq 0
    \quad\text{for every }i,j\in\inter{d},
    \notag\\
    &
    \sum_{j=1}^d\MC R_{ij}=1
    \quad\text{for every }i\in\inter{d}.
\end{align}

We start by giving the following definition of oscillation in a vector.

\begin{definition}\label{def:oscillations}
For every $x=(x_1,\ldots,x_d)\in\RR^d$, we define its oscillation by
\begin{align*}
    \operatorname{osc}(x) \coloneqq \max_{j\in\inter{d}}x_j - \min_{j\in\inter{d}}x_j.    
\end{align*}
\end{definition}

\noindent Notice that $\operatorname{osc}(x)$ defines a seminorm on $\RR^d$. 

The following result is the finite-dimensional specialization applied to the rollout matrix $\MC R$ of the Doeblin--Dobrushin contraction theorem (see \cite{dobrushin1956central} or \cite[Theorem 4.14]{seneta2006non}). We include it for completeness, since the proof is short and because the notation adopted here is tailored to attention rollout operators.

\begin{proposition}\label{prop:oscillations}
Let $\MC R=(\MC R_{ij})_{i,j=1}^d\in\RR^{d\times d}$ be a rollout matrix as in \eqref{eq:rollout_gen}, and let $\kappa(\MC R)$ denote its Dobrushin coefficient given by Definition \ref{def:dobrushin}. Then $0\leq \kappa(\MC R)\leq 1$ and, for every $x\in\RR^d$,
\begin{align*}
    \operatorname{osc}(\MC Rx) \leq \kappa(\MC R)\operatorname{osc}(x).    
\end{align*}
In particular, if there exists $\delta>0$ such that $\MC R_{ij}\geq \delta$ for all $i,j\in\inter{d}$, then $\delta\leq 1/d$,
\begin{align*}
    \kappa(\MC R)\leq 1-d\delta<1,
\end{align*}
and
\begin{align*}
    \operatorname{osc}(\MC Rx)
    \leq \kappa(\MC R)\operatorname{osc}(x)
    \leq \bigl(1-d\delta\bigr)\operatorname{osc}(x).
\end{align*}
\end{proposition}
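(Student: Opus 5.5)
The argument splits into three parts: the bounds on $\kappa$, the oscillation contraction, and the consequence of a uniform lower bound $\delta$.

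\textbf{Bounds on $\kappa$.} For each pair $i,k$, the overlap $\sum_j\min\{\MC R_{ij},\MC R_{kj}\}$ lies in $[0,1]$. It is nonnegative because the entries are, and it is at most $\sum_j\MC R_{ij}=1$. Hence $\kappa(\MC R)\in[0,1]$.

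\textbf{Oscillation contraction.} Fix $x\in\RR^d$ and two destination indices $i,k$. Write
\begin{align*}
    (\MC Rx)_i-(\MC Rx)_k=\sum_{j=1}^d(\MC R_{ij}-\MC R_{kj})x_j .
\end{align*}
Set $a_j=\MC R_{ij}-\MC R_{kj}$. Since both rows sum to one, $\sum_j a_j=0$. Split the index set into $J^+=\{j: a_j>0\}$ and $J^-=\{j: a_j\le 0\}$. Then
\begin{align*}
    \sum_{j\in J^+}a_j=-\sum_{j\in J^-}a_j=\tfrac12\|p_i-p_k\|_1 .
\end{align*}
The identity $|a-b|=a+b-2\min\{a,b\}$, summed over $j$, gives
\begin{align*}
    \tfrac12\|p_i-p_k\|_1=1-\sum_{j=1}^d\min\{\MC R_{ij},\MC R_{kj}\}.
\end{align*}
This also proves the equivalence \eqref{eq:dobrushin_def_eq} in Definition \ref{def:dobrushin}. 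Next, bound $x_j\le\max x$ on $J^+$ and $x_j\ge\min x$ on $J^-$. This yields
\begin{align*}
    \sum_{j=1}^d a_jx_j\le \tfrac12\|p_i-p_k\|_1\,\operatorname{osc}(x)\le\kappa(\MC R)\operatorname{osc}(x).
\end{align*}
Finally, choose $i$ to maximize $(\MC Rx)_i$ and $k$ to minimize it. This gives $\operatorname{osc}(\MC Rx)\le\kappa(\MC R)\operatorname{osc}(x)$.

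\textbf{Uniform lower bound.} Suppose $\MC R_{ij}\ge\delta$ for all $i,j$. Summing a row gives $1=\sum_j\MC R_{ij}\ge d\delta$, so $\delta\le 1/d$. Every pairwise overlap satisfies $\sum_j\min\{\MC R_{ij},\MC R_{kj}\}\ge d\delta$, hence $\kappa(\MC R)\le 1-d\delta$. The strict inequality $1-d\delta<1$ follows from $\delta>0$. Chaining this with the contraction estimate above gives the final displayed bound.

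The only step needing care is the sign-splitting bound in the contraction estimate. Everything there hinges on $\sum_j a_j=0$, which is what lets the positive and negative parts pair off against $\max x$ and $\min x$. The remaining steps are direct consequences of row-stochasticity.
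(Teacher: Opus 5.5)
Your proof is correct and is essentially the paper's argument. Your split of $a_j=\MC R_{ij}-\MC R_{kj}$ into positive and negative parts is exactly the paper's decomposition $\MC R_{ij}=c_j+p_j$, $\MC R_{kj}=c_j+q_j$ with $c_j=\min\{\MC R_{ij},\MC R_{kj}\}$. Using $\sum_j a_j=0$ to pair these parts against $\max x$ and $\min x$ only replaces the paper's affine normalization of $x$ to $y\in[0,1]^d$ and its separate constant-$x$ case. The $\delta$-consequences match the paper, and you additionally verify the equivalence of the two expressions for $\kappa$ in Definition \ref{def:dobrushin}, which the paper only asserts.
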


\begin{proof}
Fix $x\in\RR^d$, and set
\begin{align*}
    m\coloneqq \min_{j\in\inter{d}}x_j, \quad M\coloneqq \max_{j\in\inter{d}}x_j.    
\end{align*}
Then $\operatorname{osc}(x)=M-m$. 

If $M=m$, then $\operatorname{osc}(x)=0$ meaning that $x$ is constant. In this case, since $\MC R$ is row-stochastic, we necessarily have $\MC Rx=x$, and therefore $\operatorname{osc}(\MC Rx)=0$.

Due to the above argument, in what follows we may assume that $M>m$. 

\medskip
\noindent Define the vector $y\in\RR^d$ by
\begin{align*}
    y_j\coloneqq \frac{x_j-m}{M-m}, \quad\text{ for all } j\in\inter{d}.    
\end{align*}
Since $m\leq x_j\leq M$, we have 
\begin{align*}
    0\leq y_j\leq 1, \quad\text{ for all } j\in\inter{d}.    
\end{align*}
Moreover, $x=m\mathbf 1+(M-m)y$. Because $\MC R$ is row-stochastic, $\MC R\mathbf 1=\mathbf 1$. Consequently,
\begin{align*}
    \MC R x = m\MC R\mathbf 1+(M-m)\MC R y = m\mathbf 1+(M-m)\MC R y.    
\end{align*}

Adding a constant vector does not change oscillation, while multiplication by a nonnegative scalar scales the oscillation. Therefore,
\begin{align}\label{eq:oscillation_prel}
    \operatorname{osc}(\MC R x) = (M-m)\operatorname{osc}(\MC R y).
\end{align}
It remains to estimate $\operatorname{osc}(\MC R y)$. For arbitrary $i,k\in\inter{d}$, we have
\begin{align}\label{eq:est_P}
    (\MC R y)_i-(\MC R y)_k = \sum_{j=1}^d \big(\MC R_{ij}-\MC R_{kj}\big)y_j.
\end{align}
For each $j\in\inter{d}$, define $c_j\coloneqq \min\{\MC R_{ij},\MC R_{kj}\}$. Then we may write
\begin{align*}
    \MC R_{ij} = c_j+p_j \quad\text{ and }\quad \MC R_{kj}=c_j+q_j,    
\end{align*}
where
\begin{align*}
    p_j\coloneqq \MC R_{ij}-c_j\geq 0 \quad\text{ and }\quad q_j\coloneqq \MC R_{kj}-c_j\geq 0.    
\end{align*}
Notice that, for every $j$, at least one of $p_j$ and $q_j$ is zero. Using this decomposition in \eqref{eq:est_P}, we obtain
\begin{align*}
    (\MC R y)_i-(\MC R y)_k = \sum_{j=1}^d p_jy_j - \sum_{j=1}^d q_jy_j.
\end{align*}
Moreover, since $0\leq y_j\leq 1$, it follows that
\begin{align*}
    0\leq \sum_{j=1}^d p_jy_j \leq \sum_{j=1}^d p_j \quad\text{ and }\quad 0\leq \sum_{j=1}^d q_jy_j \leq \sum_{j=1}^d q_j.   
\end{align*}
Therefore, using the row-stochasticity of $\MC R$,
\begin{align}\label{eq:est_P_2}
    (\MC R y)_i-(\MC R y)_k &\leq \sum_{j=1}^d p_j = \sum_{j=1}^d\bigl(\MC R_{ij}-c_j\bigr) =1-\sum_{j=1}^d c_j
    =1-\sum_{j=1}^d\min\{\MC R_{ij},\MC R_{kj}\}
    \notag\\
    &\leq
    1-\min_{r,q\in\inter{d}}
    \sum_{j=1}^d\min\{\MC R_{rj},\MC R_{qj}\} =\kappa(\MC R).
\end{align}
Since $i$ and $k$ are arbitrary, we can choose $i_\ast$ and $k_\ast$ such that
\begin{align*}
    (\MC R y)_{i_\ast} = \max_{i\in\inter{d}}(\MC R y)_i \quad\text{ and }\quad (\MC R y)_{k_\ast} = \min_{k\in\inter{d}}(\MC R y)_k.
\end{align*}
Applying \eqref{eq:est_P_2} to this pair yields
\begin{align}\label{eq:est_P_3}
    \operatorname{osc}(\MC R y) = (\MC R y)_{i_\ast}-(\MC R y)_{k_\ast} \leq \kappa(\MC R).
\end{align}
Combining \eqref{eq:oscillation_prel} and \eqref{eq:est_P_3}, we conclude that
\begin{align*}
    \operatorname{osc}(\MC R x) \leq (M-m)\kappa(\MC R) = \kappa(\MC R)\operatorname{osc}(x).    
\end{align*}
It remains to verify the properties of $\kappa(\MC R)$. For any pair $(i,k)$, we have
\begin{align*}
    0 \leq \sum_{j=1}^d\min\{\MC R_{ij},\MC R_{kj}\} \leq \sum_{j=1}^d \MC R_{ij} = 1.    
\end{align*}
This immediately gives $0\leq \kappa(\MC R)\leq 1$. If, moreover, $\MC R_{ij}>0$ for all $(i,j)$, then, for every $(i,k,j)$,
\begin{align*}
    \min\{\MC R_{ij},\MC R_{kj}\}>0.    
\end{align*}
Since there are finitely many rows and columns,
\begin{align*}
    \min_{i,k\in\inter{d}} \sum_{j=1}^d\min\{\MC R_{ij},\MC R_{kj}\} > 0.    
\end{align*}

It follows that $\kappa(\MC R)<1$. Finally, suppose that $\MC R_{ij}\geq\delta>0, \quad\text{for all }i,j\in\inter{d}$. Since $\MC R$ is row stochastic, we have $\delta\leq 1/d$. Moreover, for every pair $(i,k)$,
\begin{align*}
    \min\{\MC R_{ij},\MC R_{kj}\}\geq\delta,    
\end{align*}
and therefore
\begin{align*}
    \sum_{j=1}^d \min\{\MC R_{ij},\MC R_{kj}\} \geq d\delta.    
\end{align*}
Thus, $\kappa(\MC R) \leq 1-d\delta$ and we can conclude that $\operatorname{osc}(\MC R x) \leq \big(1-d\delta\big)\operatorname{osc}(x)$.    
\end{proof}

\noindent We now present the proof of Theorem \ref{thm:rollout}. 

\begin{proof}[Proof of Theorem \ref{thm:rollout}]
For all $i\in\inter{d}$, let
\begin{align*}
    r_i\coloneqq \big(\MC R_{i1},\ldots,\MC R_{id}\big) \in\RR^d 
\end{align*}
denote the $i$-th row of $\MC R$. Since $\MC R$ is row-stochastic, every $r_i$ belongs to the probability simplex
\begin{align*}
    \Delta^{d-1} \coloneqq \bigg\{z\in\RR^d:\; z_j\geq 0\ \text{for all }j\in\inter{d}, \sum_{j=1}^d z_j=1\bigg\}.
\end{align*}
Define 
\begin{align*}
    v\coloneqq \frac 1d \MC R^\top\*1 = \frac1d\sum_{i=1}^d r_i\in\RR^d.    
\end{align*}

Since $v$ is a convex combination of the rows $r_i\in\Delta^{d-1}$, and since the probability simplex is convex, we have that $v\in\Delta^{d-1}$. 

Now define $\Pi_{\MC R}\coloneqq\* 1v^\top$. Every row of $\Pi_{\MC R}$ is equal to $v^\top$. Therefore, 
\begin{align*}
    (\Pi_{\MC R})_{ij}=v_j\geq 0 \quad\text{ and }\quad \sum_{j=1}^d (\Pi_{\MC R})_{ij} = \sum_{j=1}^d v_j = 1
\end{align*}

Thus $\Pi_{\MC R}$ is row-stochastic. Moreover, since it is the outer product of the nonzero vectors $\*1$ and $v$, we also have that $   \operatorname{rank}(\Pi_{\MC R})=1$.    

It remains to estimate the distance between $\MC R$ and $\Pi_{\MC R}$. By \eqref{eq:dobrushin_def_eq}, the Dobrushin coefficient admits the representation
\begin{align*}
    \kappa(\MC R) = \frac 12 \max_{i,k\in\inter{d}} \|r_i-r_k\|_1.    
\end{align*}

Equivalently, for every $i,k\in\inter{d}$, we have $\|r_i-r_k\|_1 \leq 2\kappa(\MC R)$. Fix $i\in\inter{d}$. Since $v$ is the arithmetic mean of the rows, 
\begin{align*}
    r_i-v &= r_i-\frac 1d \sum_{k=1}^d r_k = \frac1d \sum_{k=1}^d(r_i-r_k).    
\end{align*}
Using the triangle inequality, then we obtain
\begin{align*}
    \|r_i-v\|_1
    \leq \frac1d\sum_{k=1}^d\|r_i-r_k\|_1
    =\frac1d\sum_{\substack{k=1\\k\neq i}}^d\|r_i-r_k\|_1
    \leq
    \frac1d\sum_{\substack{k=1\\k\neq i}}^d
    2\kappa(\MC R)
    =
    2\left(\frac{d-1}{d}\right)\kappa(\MC R).
\end{align*}
Since the $i$-th row of $\Pi_{\MC R}$ is $v^\top$, this gives 
\begin{align*}
    \|\MC R_{i\cdot} - (\Pi_{\MC R})_{i\cdot} \|_1 = \|r_i-v\|_1 \leq 2\left(\frac{d-1}{d}\right)\kappa(\MC R).    
\end{align*}
Taking the maximum over $i\in\inter{d}$, we conclude that
\begin{align*}
    \|\MC R-\Pi_{\MC R}\|_{\infty,1} \leq 2\left(\frac{d-1}{d}\right)\kappa(\MC R).    
\end{align*}
Finally, for the lower bound, notice that we can write $v$ as
\begin{align*}
    v=\frac{1}{d}\sum_{k=1}^d r_k.
\end{align*}
Therefore, for every $i,k\in\inter{d}$,
\begin{align*} 
    \|r_i-r_k\|_1 \leq \|r_i-v\|_1+\|r_k-v\|_1 \leq 2\|\MC R-\Pi_{\MC R}\|_{\infty,1}.
\end{align*}
Using again \eqref{eq:dobrushin_def_eq}, we conclude that $\kappa(\MC R) \leq \|\MC R-\Pi_{\MC R}\|_{\infty,1}$.
\end{proof}

Theorem \ref{thm:rollout} gives the rollout column scores a structural interpretation. When the Dobrushin coefficient is small, the rows of $\MC R$ are close to their mean profile $v=d^{-1} \MC R^\top \*1$, whose components satisfy $v_j=s_j/d$. Thus, in the approximately rank-one regime, the column scores encode the common distribution toward which the rows of the rollout operator homogenize. This statement concerns the representation of the rollout operator and does not imply that the profile $v$ is concentrated or that dominant coordinates necessarily emerge. Any heterogeneity of $v$ must be assessed separately from the contraction estimate.

\begin{remark}
We stress that small Dobrushin coefficient does not imply concentration of the propagation profile. For example, both $\MC R_1=d^{-1}\*1\*1^\top$ and $\MC R_2=\* 1 e_1^\top$, where $e_1$ is the first vector in the canonical basis of $\RR^d$, satisfy $\kappa(\MC R_i)=0$ and $\MC R_i=\Pi_{\MC R_i}$. However, the associated profiles are respectively uniform and maximally concentrated. Theorem \ref{thm:rollout} therefore characterizes row homogenization, but not the heterogeneity of the common profile.    
\end{remark}

\subsubsection{Depth-dependent rank-one approximation through the product structure}\label{app:product}

Theorem \ref{thm:rollout} characterizes the geometric structure of rollout operators in terms of their Dobrushin coefficient. We now examine how this coefficient is controlled by the product structure defining attention rollout. 

Throughout this subsection, fix $x\in\RR^d$ and write
\begin{align*}
    \MC R=R(x)
    =\MC A^{(L)}(x)\MC A^{(L-1)}(x)\cdots\MC A^{(1)}(x),
\end{align*}
where
\begin{align*}
    \MC A^{(\ell)}(x)
    \coloneqq \*A^{(\ell)}(x)
    =\frac12\Bigl(I+A^{(\ell)}(x)\Bigr).
\end{align*}

Here $A^{(\ell)}(x)$ is the attention matrix at layer $\ell$. To simplify notation, the dependence on $x$ is suppressed in the remainder of this subsection. Let $a_i^{(\ell)}$ denote the $i$-th row of $A^{(\ell)}$. Then, the $i$-th row of $\MC A^{(\ell)}$ is
\begin{align*}
    \* a_i^{(\ell)} = \frac12 \big(e_i+a_i^{(\ell)}\big),    
\end{align*}
where $e_i$ is the $i$-th canonical basis vector. Therefore, for every $i\neq k$, 
\begin{align*}
    \frac12 \left\|\*a_i^{(\ell)} - \*a_k^{(\ell)}\right\|_1 &= \frac 14 \left\|(e_i-e_k) + \big(a_i^{(\ell)}-a_k^{(\ell)}\big)\right\|_1
    \\
    &\leq \frac 14 \|e_i-e_k\|_1 + \frac 14 \left\|a_i^{(\ell)}-a_k^{(\ell)}\right\|_1 =\frac 12 + \frac 14 \left\|a_i^{(\ell)}-a_k^{(\ell)}\right\|_1.
\end{align*}
Taking the maximum over all pairs and using \eqref{eq:dobrushin_def_eq} gives
\begin{align}\label{eq:dobrushin_A_est}
    \kappa \left(\MC A^{(\ell)}\right) \leq \frac 12\Big(1+\kappa\left(A^{(\ell)}\right)\Big).    
\end{align}

Moreover, in the standard self-attention mechanism considered in this work, the attention matrices $A^{(\ell)}$ are obtained by applying a row-wise softmax to finite attention logits. Consequently, they have strictly positive entries. This implies $\kappa \left(A^{(\ell)}\right)<1$ and, consequently, $\kappa\left(\MC A^{(\ell)}\right)<1$.

Thus, every rollout factor is strictly contractive. The residual identity component nevertheless limits the strength of this contraction, as shown by the following lemma.

\begin{lemma}\label{lem:dobrushin_est}
Let $d\geq 2$, let $A\in\RR^{d\times d}$ be row-stochastic, and define
\begin{align*}
    P=\frac 12 (I+A).    
\end{align*}
Then
\begin{align*}
    \kappa(P) \geq \frac{d-2}{2(d-1)}.    
\end{align*}
\end{lemma}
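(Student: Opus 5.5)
The plan is to produce one pair of rows of $P$ whose overlap $\sum_j\min\{P_{ij},P_{kj}\}$ is at most $d/\bigl(2(d-1)\bigr)$. By Definition \ref{def:dobrushin} this gives $\kappa(P)\geq 1-\frac{d}{2(d-1)}=\frac{d-2}{2(d-1)}$. I will not look for the pair explicitly. Instead I will bound the overlap of every ordered pair $(i,k)$, $i\neq k$, by a quantity depending only on entries of $A$. Averaging that bound over all $d(d-1)$ ordered pairs then controls the minimum overlap.

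\textbf{Step 1 (pairwise bound).} Fix $i\neq k$. The entries of $P$ are $P_{ii}=\frac12(1+A_{ii})$, $P_{ki}=\frac12A_{ki}$, $P_{ik}=\frac12A_{ik}$, $P_{kk}=\frac12(1+A_{kk})$, and $P_{ij}=\frac12 A_{ij}$ for $j\neq i$. I split the overlap sum by column:
\begin{itemize}
\item in column $i$, use $\min\{P_{ii},P_{ki}\}\leq \frac12A_{ki}$;
\item in column $k$, use $\min\{P_{ik},P_{kk}\}\leq \frac12 A_{ik}$;
\item in every other column $j$, use $\min\{P_{ij},P_{kj}\}\leq \frac12A_{ij}$.
\end{itemize}
Row-stochasticity of $A$ gives $\sum_{j\neq i,k}A_{ij}=1-A_{ii}-A_{ik}$. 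The $A_{ik}$ terms cancel, and
\begin{align*}
\sum_{j=1}^d\min\{P_{ij},P_{kj}\}\leq \tfrac12\bigl(1-A_{ii}+A_{ki}\bigr).
\end{align*}

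\textbf{Step 2 (averaging).} The minimum over pairs is at most the average over ordered pairs $i\neq k$. Averaging the right-hand side involves two sums. The diagonal entries contribute $\frac{1}{d}\operatorname{tr}A$. The off-diagonal entries contribute $\frac{1}{d(d-1)}\sum_{k\neq i}A_{ki}$, and row-stochasticity gives $\sum_{k\neq i}A_{ki}=d-\operatorname{tr}A$. With $t\coloneqq\operatorname{tr}A\geq 0$, the average is
\begin{align*}
\tfrac12\Bigl(1-\tfrac{t}{d}+\tfrac{d-t}{d(d-1)}\Bigr).
\end{align*}
This is decreasing in $t$, so it is at most its value at $t=0$, namely $\frac12\bigl(1+\frac1{d-1}\bigr)=\frac{d}{2(d-1)}$.

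\textbf{Step 3 (conclusion).} Hence $\min_{i,k}\sum_j\min\{P_{ij},P_{kj}\}\leq \frac{d}{2(d-1)}$, which is the required bound on $\kappa(P)$. The assumption $d\geq 2$ guarantees that pairs with $i\neq k$ exist.

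The main obstacle is Step 1: choosing bounds in columns $i$ and $k$ that are crude enough to be simple yet sharp enough that the $A_{ik}$ terms cancel. In particular, bounding the column-$i$ minimum by the small entry $\frac12 A_{ki}$ rather than by the diagonal entry is what makes the averaging in Step 2 close with the exact constant.
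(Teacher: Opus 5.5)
Your proof is correct and follows the paper's argument: bound the overlap of each pair of distinct rows by an expression linear in the entries of $A$, then average over pairs using row-stochasticity and $\operatorname{tr}A\geq 0$. The only difference is cosmetic. The paper bounds the remaining columns by $\frac14(A_{ij}+A_{kj})$ and averages over unordered pairs, and that per-pair bound is exactly the symmetrization of your ordered-pair bound $\frac12(1-A_{ii}+A_{ki})$, so both averages equal $\frac{d-\operatorname{tr}A}{2(d-1)}$.
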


\begin{proof}
Using Definition \ref{def:dobrushin}, we estimate the overlap between two distinct rows of $P$. For distinct $i,k\in\inter{d}$, we have
\begin{align*}
    P_{ii} = \frac 12 (1+A_{ii}) \quad\text{ and }\quad P_{ki} = \frac 12 A_{ki}.    
\end{align*}
Since $A_{ki}\leq 1\leq 1+A_{ii}$, it follows that
\begin{align*}
    \min\{P_{ii},P_{ki}\} = \frac 12 A_{ki} \quad\text{ and }\quad \min\{P_{ik},P_{kk}\} = \frac 12 A_{ik}.
\end{align*}
For $j\notin\{i,k\}$, one therefore has
\begin{align*}
    \min\{P_{ij},P_{kj}\} = \frac 12 \min\{A_{ij},A_{kj}\} \leq \frac 14 (A_{ij}+A_{kj}).    
\end{align*}
Consequently,
\begin{align*}
    \sum_{j=1}^d \min\{P_{ij},P_{kj}\} \leq \frac 12 (A_{ki}+A_{ik}) + \frac 14\sum_{j\notin\{i,k\}} (A_{ij}+A_{kj}) = \frac 12 + \frac 14 \big(A_{ki}+A_{ik}-A_{ii}-A_{kk}\big).    
\end{align*}
Averaging the last quantity over unordered pairs $i<k$, we obtain
\begin{align*}
    \binom d2^{-1} \sum_{i<k} \big(A_{ki}+A_{ik}-A_{ii}-A_{kk}\big) = \frac{2\big(1-\operatorname{tr}(A)\big)}{d-1} \leq \frac{2}{d-1},    
\end{align*}
where we used the row-stochasticity of $A$ and $\operatorname{tr}(A)\geq 0$. Hence there exists at least one pair $i\neq k$ such that
\begin{align*}
    \sum_{j=1}^d\min\{P_{ij},P_{kj}\} \leq \frac 12 +\frac{1}{2(d-1)} = \frac{d}{2(d-1)}.    
\end{align*}
It follows that
\begin{align*}
    \kappa(P) \geq 1-\frac{d}{2(d-1)} = \frac{d-2}{2(d-1)}.    
\end{align*}
\end{proof}

Lemma \ref{lem:dobrushin_est} provides a dimension-dependent hard floor for the Dobrushin coefficient of an individual rollout factor. In particular,
\begin{align*}
    \frac{d-2}{2(d-1)} = \frac12-\frac{1}{2(d-1)}\to \frac 12 \text{ as } d\to+\infty.    
\end{align*}

Thus, because one half of each rollout factor is reserved for the identity, a single layer cannot contract row oscillations by substantially more than a factor of one half. 

Finally, applying submultiplicativity to the local product above and using \eqref{eq:dobrushin_A_est} (see, for instance, \cite[Section 4.3]{seneta2006non}), we obtain
\begin{align}\label{eq:dobrushin_R_est}
    \kappa(\MC R) \leq \prod_{\ell=1}^L \kappa \left(\MC A^{(\ell)}\right) \leq \prod_{\ell=1}^L \frac{1+\kappa(A^{(\ell)})}{2} = 2^{-L} \prod_{\ell=1}^L \left(1+\kappa(A^{(\ell)})\right).    
\end{align}

Estimate \eqref{eq:dobrushin_R_est} separates the respective roles of residual averaging and of the layerwise attention patterns. The factor $2^{-L}$ originates from the residual averaging in each rollout factor, while the terms $1+\kappa(A^{(\ell)})$ quantify how the attention matrices modulate the resulting bound. Since strict positivity only guarantees $\kappa(A^{(\ell)})<1$ separately at each layer, \eqref{eq:dobrushin_R_est} does not by itself imply that $\kappa(\MC R)\to 0$ as $L\to +\infty$. Such a conclusion would follow, for instance, under the uniform condition $\sup_{\ell\geq 1}\kappa(A^{(\ell)})\leq\delta<1$, in which case
\begin{align*}
    \kappa(\MC R)\leq\left(\frac{1+\delta}{2}\right)^L\to 0.  
\end{align*}

In the absence of such a condition, the depth-dependent decrease of the Dobrushin coefficient remains an empirical property of the architectures studied in Section \ref{sec:experiments}. Whenever the small-coefficient regime is reached, Theorem \ref{thm:rollout} guarantees that the rollout operator is close to a rank-one stochastic matrix whose propagation profile is determined by the realized attention matrices.

\begin{remark}
The product estimate \eqref{eq:dobrushin_R_est} applies to the local rollout operator \eqref{eq:rollout} which is by definition a product of attention propagation matrices. The global rollout operator $\*R$ in \eqref{eq:rollout_global} is instead defined as the dataset average and is generally not representable as a product of layerwise rollout matrices. Nevertheless, $\*R$ is row-stochastic because it is the average of row-stochastic matrices. Consequently, Theorem \ref{thm:rollout} applies directly to $\*R$, independently of any product representation.

Furthermore, by \cite[Theorem 5.1]{gaubert2015dobrushin}, the Dobrushin coefficient coincides with the operator norm induced by Hopf's oscillation seminorm. Hence, by the convexity of operator norms,
\begin{align*}
    \kappa(\*R) = \kappa\left(\frac1N\sum_{n=1}^N R(x_n)\right) \leq \frac1N\sum_{n=1}^N\kappa(R(x_n)).    
\end{align*}
Combining this inequality with \eqref{eq:dobrushin_R_est} yields
\begin{align}\label{eq:dobrushin_R_est_global}
    \kappa(\*R) \leq \frac{2^{-L}}N\sum_{n=1}^N \prod_{\ell=1}^L \Big(1+\kappa(A^{(\ell)}(x_n))\Big).    
\end{align}
\end{remark}

\subsection{A simplified analytical relation between rollout and SHAP}

To make the analytical statement unambiguous, we work with exact interventional Shapley values. Let $D\coloneqq\inter{d}$, let $Z\sim\mu$ be a reference random vector, and, for $S\subseteq D$ and $x\in\RR^d$, define the hybrid vector
\begin{align*}
    \bigl(x_S\oplus Z_{D\setminus S}\bigr)_k
    \coloneqq
    \begin{cases}
        x_k, & k\in S,\\
        Z_k, & k\notin S.
    \end{cases}
\end{align*}
The interventional coalition game is
\begin{align*}
    V_f^{\mathrm{int}}(S;x)
    \coloneqq
    \mathbb E\!\left[
        f\bigl(x_S\oplus Z_{D\setminus S}\bigr)
    \right].
\end{align*}
The corresponding interventional Shapley value \cite{shapley1953value} is
\begin{align}\label{eq:interventional_shap}
    \phi_{f,j}^{\mathrm{int}}(x) \coloneqq 
    \sum_{S\subseteq D\setminus\{j\}}
    \frac{|S|!\,(d-|S|-1)!}{d!}
    \Bigl(
        V_f^{\mathrm{int}}(S\cup\{j\};x)
        -V_f^{\mathrm{int}}(S;x)
    \Bigr).
\end{align}
Whenever the relevant expectations exist, efficiency gives
\begin{align*}
    f(x)
    =
    \mathbb E[f(Z)]
    +\sum_{j=1}^d\phi_{f,j}^{\mathrm{int}}(x).
\end{align*}
This game differs in general from conditional SHAP, which is based on
\begin{align*}
    V_f^{\mathrm{cond}}(S;x)
    \coloneqq
    \mathbb E\!\left[f(Z)\mid Z_S=x_S\right],
\end{align*}
whose associated conditional Shapley values $\phi_{f,j}^{\mathrm{cond}}(x)$ are obtained by applying to it the Shapley-value construction \eqref{eq:interventional_shap}.

If the components of $Z$ are independent, the interventional and conditional games coincide, up to the usual almost-everywhere qualification. The independence assumption in Proposition \ref{prop:shap} is imposed for this reason and should be regarded as a restrictive sufficient condition.

To study how propagation- and attribution-based quantities can become related in a simplified setting, we fix a rollout matrix $\MC R\in\RR^{d\times d}$ that is independent of the input $x$, and consider
\begin{align}\label{eq:f_split}
    f(x)=g(x)+e(x), \quad g(x)=\beta^\top\MC R x, \quad \beta\in\RR^d.
\end{align}

This decomposition is an analytical device, not a surrogate representation of the complete Transformer. Expanding $g$ gives
\begin{align*}
    g(x)=\sum_{j=1}^d\gamma_jx_j,
    \qquad
    \gamma_j=\sum_{i=1}^d\beta_i\MC R_{ij}.
\end{align*}

The following proposition supplies only a one-sided bound. It does not imply agreement between rollout and SHAP rankings.

\begin{proposition}\label{prop:shap}
Let $\MC R$ be a fixed, input-independent, rollout operator, let $\beta\in\RR^d$, and let $f=g+e$ be as in \eqref{eq:f_split}. Define
\begin{align*}
    \beta^+
    \coloneqq
    \max_{i\in\inter{d}}|\beta_i|,
    \qquad
    s_j
    \coloneqq
    \sum_{i=1}^d\MC R_{ij}.
\end{align*}

Let $X\sim\mu$ have independent components satisfying $\mathbb E[X_j]=0$ and $\mathbb E[|X_j|]<+\infty$ for all $j\in\inter{d}$,
and assume that $e$ is globally bounded on $\RR^d$. Write
\begin{align*}
    \|e\|_\infty
    \coloneqq
    \sup_{x\in\RR^d}|e(x)|
    <+\infty.
\end{align*}
Define the exact interventional SHAP importance by
\begin{align*}
    I_{f,j}^{\mathrm{int}}
    \coloneqq
    \mathbb E\!\left[
        \left|\phi_{f,j}^{\mathrm{int}}(X)\right|
    \right].
\end{align*}
Then
\begin{align}\label{eq:SHAP_est_above}
    I_{f,j}^{\mathrm{int}}
    \leq
    \beta^+s_j\,\mathbb E[|X_j|]
    +2\|e\|_\infty.
\end{align}
Under the stated independence assumption, the same bound holds for exact conditional SHAP values.
\end{proposition}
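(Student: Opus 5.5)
The plan is to use the linearity of the Shapley value in the game. Because $V_f^{\mathrm{int}}(S;x)=V_g^{\mathrm{int}}(S;x)+V_e^{\mathrm{int}}(S;x)$ and the construction \eqref{eq:interventional_shap} is linear in $V$, we have
\begin{align*}
    \phi^{\mathrm{int}}_{f,j}(x)=\phi^{\mathrm{int}}_{g,j}(x)+\phi^{\mathrm{int}}_{e,j}(x).
\end{align*}
Here the reference vector $Z$ is taken with law $\mu$, the same law as $X$. The proof then reduces to bounding $\mathbb E[|\phi^{\mathrm{int}}_{g,j}(X)|]$ and $\mathbb E[|\phi^{\mathrm{int}}_{e,j}(X)|]$ separately, and concluding with the triangle inequality.

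\textbf{Linear part.} Since $g(x)=\sum_k\gamma_kx_k$, the game is
\begin{align*}
    V_g^{\mathrm{int}}(S;x)=\sum_{k\in S}\gamma_kx_k+\sum_{k\notin S}\gamma_k\mathbb E[Z_k]=\sum_{k\in S}\gamma_kx_k,
\end{align*}
using $\mathbb E[Z_k]=0$ and integrability. Each marginal contribution therefore equals $\gamma_jx_j$, independently of $S$. Since the Shapley weights sum to one, $\phi^{\mathrm{int}}_{g,j}(x)=\gamma_jx_j$. Nonnegativity of $\MC R$ gives
\begin{align*}
    |\gamma_j|\le\sum_i|\beta_i|\MC R_{ij}\le\beta^+s_j,
\end{align*}
and hence $\mathbb E[|\phi^{\mathrm{int}}_{g,j}(X)|]\le\beta^+s_j\,\mathbb E[|X_j|]$.

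\textbf{Bounded part.} For every $S$ and $x$ we have $|V_e^{\mathrm{int}}(S;x)|\le\|e\|_\infty$, so every marginal difference is bounded by $2\|e\|_\infty$. The weights again sum to one, so $|\phi^{\mathrm{int}}_{e,j}(x)|\le2\|e\|_\infty$ pointwise, and the same bound holds in expectation. A minor technical point is measurability and existence of these expectations. It follows from boundedness of $e$ and integrability of $X$, provided $e$ is measurable, which I will assume or note implicitly.

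\textbf{Conditional SHAP.} Under independence of the components, $\mathbb E[f(Z)\mid Z_S=x_S]$ coincides (a.e.) with $\mathbb E[f(x_S\oplus Z_{D\setminus S})]$. Hence the conditional Shapley values equal the interventional ones for $\mu$-a.e.\ $x$, and the expectation under $X\sim\mu$ is unchanged. The only delicate step is this almost-everywhere identification. I would handle it by a Fubini argument on the product measure, writing the conditional expectation explicitly as the integral over the independent coordinates.
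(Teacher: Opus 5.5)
Your proposal is correct and follows the paper's proof essentially step for step: linearity of the Shapley operator, the closed form $\phi^{\mathrm{int}}_{g,j}(x)=\gamma_jx_j$ coming from the zero means, the pointwise bound $2\|e\|_\infty$ from nonnegative Shapley weights that sum to one, and $|\gamma_j|\le\beta^+s_j$ from nonnegativity of $\MC R$, followed by taking expectations. Your Fubini-based almost-everywhere identification of the conditional and interventional games, together with the measurability caveat on $e$, spells out what the paper only asserts in the remark preceding the proposition.
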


\begin{proof}
The interventional Shapley operator is linear, so
\begin{align*}
    \phi_{f,j}^{\mathrm{int}}(x) =\phi_{g,j}^{\mathrm{int}}(x) + \phi_{e,j}^{\mathrm{int}}(x).
\end{align*}

For every coalition $S$ and every $x\in\RR^d$, $\left|V_e^{\mathrm{int}}(S;x)\right| \leq \|e\|_\infty$. The Shapley weights in \eqref{eq:interventional_shap} are nonnegative and sum to one. Hence $\left|\phi_{e,j}^{\mathrm{int}}(x)\right| \leq 2\|e\|_\infty$
for every $x\in\RR^d$. For the linear function $g$,
\begin{align*}
    V_g^{\mathrm{int}}(S;x)
    =
    \sum_{k\in S}\gamma_kx_k
    +
    \sum_{k\notin S}\gamma_k\mathbb E[X_k].
\end{align*}
Every marginal contribution of feature $j$ is therefore
\begin{align*}
    \gamma_j\bigl(x_j-\mathbb E[X_j]\bigr)
    =\gamma_jx_j,
\end{align*}
and consequently $\phi_{g,j}^{\mathrm{int}}(x)=\gamma_jx_j$. Since rollout matrices are nonnegative,
\begin{align*}
    |\gamma_j|
    &=
    \left|\sum_{i=1}^d\beta_i\MC R_{ij}\right|
    \leq
    \sum_{i=1}^d|\beta_i|\MC R_{ij}
    \leq
    \beta^+s_j.
\end{align*}
Thus, for every $x\in\RR^d$,
\begin{align*}
    \left|\phi_{f,j}^{\mathrm{int}}(x)\right|
    \leq
    \beta^+s_j|x_j|
    +2\|e\|_\infty.
\end{align*}
Taking expectations proves \eqref{eq:SHAP_est_above}.
\end{proof}

\begin{remark}
Nonnegativity of $\MC R$ is essential for the bound in terms of the unsigned column mass $s_j$. Row-stochasticity is not otherwise used in this proposition. For an arbitrary signed matrix $B$, the same argument requires the absolute column mass
\begin{align*}
    t_j\coloneqq\sum_{i=1}^d|B_{ij}|,
\end{align*}
and yields a bound with $t_j$ in place of $s_j$. The proposition therefore does not extend verbatim to arbitrary signed matrices while retaining the original score $s_j$.
\end{remark}

The independence assumption is substantive and is generally not satisfied by correlated biomedical covariates. Centering can enforce $\mathbb E[X_j]=0$, but centering, scaling, whitening, PCA, and decorrelation do not in general guarantee independence. Proposition \ref{prop:shap} should therefore be read only as an illustrative sufficient-condition result; it does not theoretically explain or validate the empirical agreement between rollout rankings and \texttt{GradientExplainer} rankings.

Finally, because $d$ is finite, the first-moment assumptions imply that
\begin{align*}
    M\coloneqq\max_{j\in\inter{d}}\mathbb E[|X_j|]<+\infty,
\end{align*}
and hence
\begin{align*}
    I_{f,j}^{\mathrm{int}}
    \leq
    \beta^+Ms_j
    +2\|e\|_\infty.
\end{align*}

\section{Biomedical context for selected highly ranked variables}\label{app:bio}

This appendix places selected variables from the representative 11-layer ranking in biomedical context. It is a literature cross-check, not a validation of the ranking, a biomarker-discovery analysis, or evidence of causal physiological mechanisms.

The discussion is based on the representative realization presented in Figure \ref{fig:importance}; the rankings vary across training realizations (Section \ref{subsec:feature_importance}). Several selected variables can be related to established biomedical literature:
\begin{itemize}
    \item Serum albumin has been studied as a marker of nutritional and inflammatory status and is associated with health outcomes in older adults \cite{cabrerizo2015serum}.
    \item LDL cholesterol is an established causal risk factor for atherosclerotic cardiovascular disease \cite{ference2017low}.
    \item Glucose is central to glycemic status. The metabolite 1,5-AG is used as a marker of short-term glycemic excursions \cite{ceriello2019glycaemic,dungan20061}.
    \item 3-hydroxyisobutyric acid has been reported to promote vascular fatty-acid transport and to contribute to insulin resistance in experimental models \cite{jang2016branched}.
    \item Erythrocyte sedimentation rate is a nonspecific clinical marker of inflammation. Its appearance is compatible with, but does not establish, a link to the literature on chronic low-grade inflammation in aging \cite{ferrucci2018inflammageing}.
\end{itemize}

The appearance of these highly ranked variables is compatible with prior biomedical literature, but it does not rule out confounding, redundancy, or spurious association. The partial overlap between attribution-based and propagation-based rankings is therefore treated only as a hypothesis-generating observation.

The rollout operator should not be interpreted as reconstructing biochemical pathways or causal physiological mechanisms. The literature comparison supplies a qualitative plausibility check for this particular realization, not an independent validation of the framework or of any biomarker effect.



\bibliographystyle{siamplain}
\bibliography{biblio}


\end{document}